  \providecommand\BibTeX{{%
    \normalfont B\kern-0.5em{\scshape i\kern-0.25em b}\kern-0.8em\TeX}}}
\theoremstyle{plain}
\newtheorem{theorem}{Theorem}[section]
\newtheorem{lemma}[theorem]{Lemma}
\theoremstyle{definition}
\newtheorem{assumption}[theorem]{Assumption}
\theoremstyle{remark}
\begin{document}

\title{Is Aggregation the Only Choice? Federated Learning via Layer-wise Model Recombination}


\copyrightyear{2024}
\acmYear{2024}
\setcopyright{acmlicensed}
\acmConference[KDD '24] {Proceedings of the 30th ACM SIGKDD Conference on Knowledge Discovery and Data Mining }{August 25--29, 2024}{Barcelona, Spain.}
\acmBooktitle{Proceedings of the 30th ACM SIGKDD Conference on Knowledge Discovery and Data Mining (KDD '24), August 25--29, 2024, Barcelona, Spain}
\acmISBN{979-8-4007-0490-1/24/08}
\acmDOI{10.1145/3637528.3671722}


\begin{abstract}

Although Federated Learning (FL) enables global model training across clients without compromising their raw data, due to the unevenly distributed data among clients, existing Federated Averaging (FedAvg)-based methods suffer from the problem of low inference performance. Specifically, different data distributions among clients lead to various optimization directions of local models. Aggregating local models usually results in a low-generalized global model, which performs worse on most of the clients. To address the above issue, inspired by the observation from a geometric perspective that a well-generalized solution is located in a flat area rather than a sharp area, we propose a novel and heuristic FL paradigm named FedMR (Federated Model Recombination). The goal of FedMR is to guide the recombined models to be trained towards a flat area. Unlike conventional FedAvg-based methods, in FedMR, the cloud server recombines collected local models by shuffling each layer of them to generate multiple recombined models for local training on clients rather than an aggregated global model. Since the area of the flat area is larger than the sharp area, when local models are located in different areas, recombined models have a higher probability of locating in a flat area. When all recombined models are located in the same flat area, they are optimized towards the same direction. We theoretically analyze the convergence of model recombination. Experimental results show that, compared with state-of-the-art FL methods, FedMR can significantly improve the inference accuracy without exposing the privacy of each client.
\end{abstract}

\begin{CCSXML}
<ccs2012>
   <concept>
       <concept_id>10010147.10010178.10010219</concept_id>
       <concept_desc>Computing methodologies~Distributed artificial intelligence</concept_desc>
       <concept_significance>300</concept_significance>
       </concept>
   <concept>
 </ccs2012>
\end{CCSXML}

\ccsdesc[300]{Computing methodologies~Distributed artificial intelligence}

\keywords{Federated Learning, Model Recombination, Non-IID, Generalization}



\author{Ming Hu}
\email{hu.ming.work@gmail.com}
\orcid{0000-0002-5058-4660}
\affiliation{%
  \institution{Singapore Management University}
  \city{Singapore}
  \country{Singapore}
}

\author{Zhihao Yue}
\email{51215902034@stu.ecnu.edu.cn}
\affiliation{%
  \institution{East China Normal University}
  \city{Shanghai}
  \country{China}
}

\author{Xiaofei Xie}
\email{xfxie@smu.edu.sg}
\affiliation{%
  \institution{Singapore Management University}
  \city{Singapore}
  \country{Singapore}
}

\author{Cheng Chen}
\email{chchen@sei.ecnu.edu.cn}
\affiliation{%
  \institution{East China Normal University}
  \city{Shanghai}
  \country{China}
}

\author{Yihao Huang}
\email{huangyihao22@gmail.com}
\affiliation{%
  \institution{Nanyang Technological University}
  \city{Singapore}
  \country{Singapore}
}

\author{Xian Wei}
\email{xwei@sei.ecnu.edu.cn}
\affiliation{%
  \institution{East China Normal University}
  \city{Shanghai}
  \country{China}
}

\author{Xiang Lian}
\email{xlian@kent.edu}
\affiliation{%
  \institution{Kent State University}
  \city{Ohio}
  \country{USA}
}

\author{Yang Liu}
\email{yangliu@ntu.edu.sg}
\affiliation{%
  \institution{Nanyang Technological University}
  \city{Singapore}
  \country{Singapore}
}

\author{Mingsong Chen}
\authornote{Corresponding author.}
\email{mschen@sei.ecnu.edu.cn}
\affiliation{%
  \institution{East China Normal University}
  \city{Shanghai}
  \country{China}
}

\renewcommand{\shortauthors}{Ming Hu et al.}
\maketitle

\section{Introduction}
Federated Learning (FL) \cite{fedavg,wang2023fedcda,li2023towards} has been widely acknowledged as a promising means to design large-scale distributed Artificial Intelligence (AI) applications, e.g., Artificial Intelligence of Things (AIoT) systems~\cite{tcad_zhang_2021,hu2023aiotml,wang2024feddse}, healthcare systems~\cite{kdd_qian_2021,adnan2022federated}, and recommender systems~\cite{tan2020federated,wen2023survey}.
Unlike conventional Deep Learning (DL) methods, the cloud-client architecture based FL supports the collaborative training of a global DL model among clients without compromising their raw data~\cite{nips_naman_2021}.
In each FL training round, the cloud server first dispatches the global model to its selected clients for local training and then gathers the corresponding gradients of trained models from clients for aggregation.
In this way, clients can train a global model without sharing data.

Although FL enables effective collaborative training among multiple clients while protecting data privacy, existing FL methods suffer from the problem of ``weight divergence''~\cite{kairouz2021advances}.
Especially when the data on the clients are non-IID (Identically and Independently Distributed)~\cite{infocom_hao_2020,iclr_2021_durmus}, the optimal directions of local models on clients and the aggregated global model on the cloud server are significantly inconsistent, resulting in serious inference performance degradation of the global model. 
To improve FL performance in non-IID scenarios, various FL methods have been studied, e.g., client grouping-based methods~\cite{axiv_ming_2021}, global control variable-based methods~\cite{icml_scaffold,aaai_yutao_2021,fedprox}, Knowledge Distillation (KD)-based methods\cite{nips_tao_2020,icml_zhuangdi_2021,wang2023dafkd}, and mutation-based methods~\cite{hu2024fedmut}. 
The basic ideas of these solutions are to guide the local training on clients~\cite{icml_scaffold,aaai_yutao_2021} or adjust parameter weights for model aggregation~\cite{nips_tao_2020,icml_zhuangdi_2021}.

Although these methods are promising in alleviating the impact of data heterogeneity, most of them adopt the well-known Federated Averaging (FedAvg)-based paradigm, which may potentially reduce generalization performance.
This is mainly because FedAvg paradigm only aggregates the parameters of collected local models and initializes local training by clients with the same global models.
Specifically, since the data distribution among clients is different, the optimal directions of the local models are diverse.
On the one hand, although the aggregation operation can achieve knowledge sharing among multiple local models, it can still neglect the specific knowledge learned by local models, which seriously limits the inference performance of the global model.
On the other hand, since FedAvg only uses the same global model for local training, FL training inevitably results in notorious {\it stuck-at-local-search} problems during local training.
As a result, the global model based on simple statistical averaging cannot accurately reflect both individual efforts and the potential of local models in the search for optimal global models.
Therefore, \textit{how to overcome the shortcomings of the FedAvg-based paradigm and improve the performance of FL in non-IID scenarios is an important challenge.}

Some recent research on model training indicates
that, from the perspective of the loss landscapes of DL models, optimal solutions with well generalization performance often lie in flat valleys, while the inferior ones are always located in sharp ravines~\cite{hochreiter1997flat,kwon2021asam}. 
Inspired by the above observation, to collaboratively train a well-generalized model, FL needs to guide the local training towards a more flat area.
Since the direction of gradient descent is stochastic, compared to using the same global model, using multiple global models for local training has a greater probability that the existing model can optimize to a flatter area.
Since flat areas are usually larger than sharp areas, intuitively, the exchange of the corresponding parameters among multiple models rather than aggregation can allow them to be displaced in the solution space. 
When a model is stuck in a sharp area, the parameter exchange may make it escape from the sharp area.
With continuous training and parameter exchange, when multiple models are located in the same flat area, these models will optimize in the same direction, that is, the center of the flat area.

Inspired by the above intuition, this paper proposes a novel FL paradigm called FedMR 
 ({\bf Fed}erated {\bf M}odel {\bf R}ecombination), which can effectively help the training of local models  escape from sharp area. 
Unlike FedAvg that aggregates all the collected
local models in each FL training round, FedMR randomly shuffles the parameters of different local models within the same layers, and recombines them to form new local models.
%
In this way, FedMR can derive diversified models that can effectively escape local optimal solutions for the local training of clients. 
%
The main contributions of this paper can be summarized as follows:
\begin{compactitem}
    \item
We propose a novel FL paradigm named FedMR, which contains a newly layer-wise model recombination method to replace the traditional FedAvg-based model aggregation with the aim of improving FL inference performance.


\item 
We introduce a
two-stage training scheme for FedMR, which combines the merits of both model recombination and aggregation to accelerate the overall FL training process. 

\item We theoretically prove the convergence of FedMR in convex scenarios and conduct empirical experiments to validate the convergence of FedMR in non-convex scenarios.

\item 
 We conduct extensive experiments on various well-known models and datasets to show both the effectiveness and compatibility of FedMR. 



\end{compactitem}



\section{Related Work}\label{sec:relatedwork}
To address the  problem of uneven data distributions, 
exiting solutions  
can be mainly
classified into four categories, i.e., client grouping-based methods, global control variable-based methods, knowledge distillation-based methods, and mutation-based methods.
The {\it device grouping-based methods}
group and select clients for aggregation based on the data similarity between clients. For example, FedCluster~\cite{bigdata_cheng_2020} divides clients into multiple clusters and performs multiple cycles of  meta-update to boost the overall FL convergence. 
Based on either sample size or model similarity, CluSamp~\cite{icml_yann_2021} groups clients to achieve a better client representativity and a reduced variance of client stochastic aggregation parameters in FL.
By modifying the  penalty terms of loss functions during  FL 
training, the {\it global control variable-based methods}
can be used to smooth the FL convergence process. 
For example, FedProx~\cite{fedprox} regularizes local loss functions with the squared distance between local models and the global model to stabilize the model convergence. Similarly, SCAFFOLD~\cite{icml_scaffold} uses global control variables to correct the ``client-drift'' problem in the local training process. 
{\it Knowledge Distillation (KD)-based methods} adopt soft targets generated by the ``teacher model'' to guide the training of ``student models''.
For example, by leveraging a proxy dataset, Zhu et al.~\cite{icml_zhuangdi_2021} proposed a data-free knowledge distillation method named FedGen to address the heterogeneous FL problem using a built-in generator. With ensemble distillation, FedDF~\cite{nips_tao_2020} accelerates the FL training by training 
 the global model through unlabeled data on the outputs of local models. 
{\it Mutation-based methods} attempt to mutate the global model to generate multiple mutated intermediate models for local training.
For example, FedMut~\cite{hu2024fedmut} utilizes the gradients to mutate the global model and dispatches the mutated models for local training.


Although the above methods can optimize FL performance from different perspectives, since coarse-grained model aggregation is performed, the inference capabilities of local models are still strongly restricted.
Furthermore, most of them cannot avoid non-negligible communication and computation overheads or the risk of data privacy exposure.
In addition, many FL methods have been proposed to address device heterogeneity problems.
To effectively train on devices with different hardware resources, some methods~\cite{alam2022fedrolex,wang2023fedlego,jia2023adaptivefl,ilhan2023scalefl} utilize heterogeneous models for local training.
To avoid stragglers caused by uneven computing capability or uncertainty~\cite{hu2020quantitative}, existing methods~\cite{hu2023gitfl,xia2024cabafl}  attempt to perform a wise client scheduling to achieve asynchronous FL training. 
Note that this paper only focuses on the data heterogeneity problem.

To the best of our knowledge,  FedMR is the first attempt using model recombination rather than aggregation for FL. 
Since FedMR considers the specific characteristics and efforts of local models, it can further mitigate the weight divergence problem, thus achieving better inference performance than state-of-the-art   FL methods. 

\section{Motivation}\label{sec:motivation}
\subsection{Intuition}
\textbf{Comparison between FedAvg and Independent Training}.
Figure~\ref{fig:motivation} illustrates
the FL training processes on the same loss landscape using FedAvg and Independent training (Indep), respectively, where the server of Indep just shuffles the received local models and then randomly dispatches them to clients without aggregation. 
In each subfigure, the local optima 
are located within the areas surrounded by solid red lines. Note that since  the upper surrounded area   
is flatter than the lower surrounded area in the loss landscape, 
the solutions within it will exhibit 
better generalization.


\begin{figure}[htp] 
	\begin{center} 
    \includegraphics[width=0.3\textwidth]{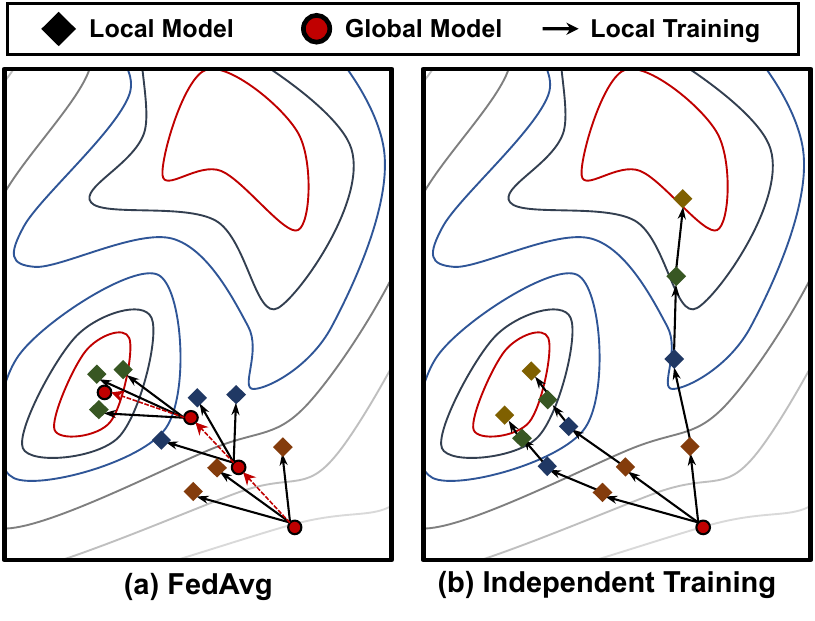}
  \caption{Training processes on the same loss landscape.}
		\label{fig:motivation} 
	\end{center}
\end{figure}

As shown in Figure~\ref{fig:motivation}(a),
along with the training process,
the aggregated global models denoted by 
red circles gradually move toward the lower
sharp area with inferior solutions, though 
the optimizations  of some local models  head toward the upper surrounded area
with better solutions. 
Such biased training is mainly because the local training starts from the same global model in each FL round. 
As an alternative, due to the lack of aggregation operation, the local models of Indep may converge in different directions as shown in 
Figure~\ref{fig:motivation}(b). 
In this case, even if 
some local training in Idep
achieves a better solution than 
the one  obtained by FedAvg, 
due to the diversified optimization directions of local models, such an important finding
can  be eclipsed by the 
results of other local models. 
Clearly,  there is a lack of 
mechanisms for Indep that can 
guide the overall training  
toward such superior solutions.

\begin{figure}[htp] 
	\begin{center} 
		\includegraphics[width=0.42\textwidth]{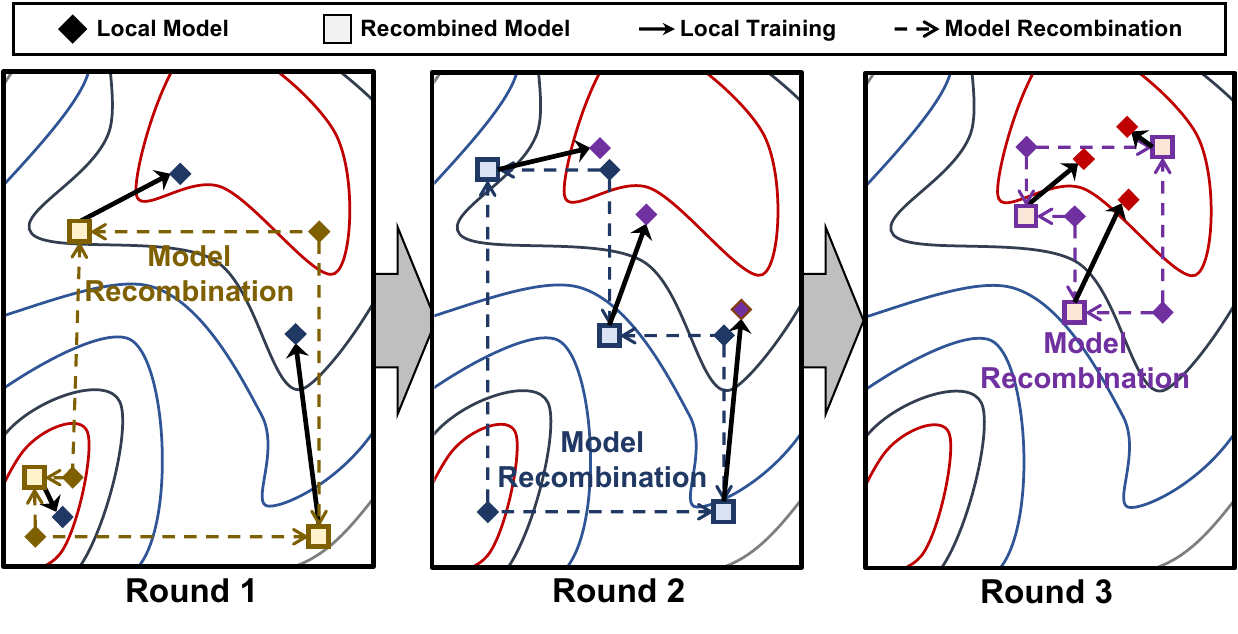}
  \caption{An example of model recombination.}
		\label{fig:motivation_b} 
	\end{center}
\end{figure}

\textbf{Intuition of Model Recombination.}
Based on the  Indep training results shown in
Figure~\ref{fig:motivation}(b), 
Figure~\ref{fig:motivation_b} illustrates
the intuition of our model recombination method, where the 
FL training starts from the three local models (denoted by  
yellow diamonds in round 1) obtained in figure~\ref{fig:motivation}(b). At the beginning of round 1, 
two of the three local models are located in the sharp ravine.
In other words, without model recombination, 
 the training of such two local models may get stuck in the lower surrounded
 area. However,  due to the weight adjustment by shuffling the layers among local models,   we can find that 
the three recombined models (denoted by yellow squares) are sparsely scattered in the loss landscape, which enables the local training
escape from local optima. 
According to \cite{hardt2016train,wu2020adversarial}, 
a small perturbation of the model weights can make it easier for local training to jump out of sharp ravines rather than flat valleys.
In other words, the recombined 
models are more likely to converge toward
 flat valleys along the local training. 
For example, in the end of  round 3, we can find that all three local models are located in the upper surrounded area, where their aggregated model has better generalization performance
than the one achieved in Figure~\ref{fig:motivation}(a).

\begin{figure}[h] 
	\begin{center} 
	\includegraphics[width=0.45\textwidth]{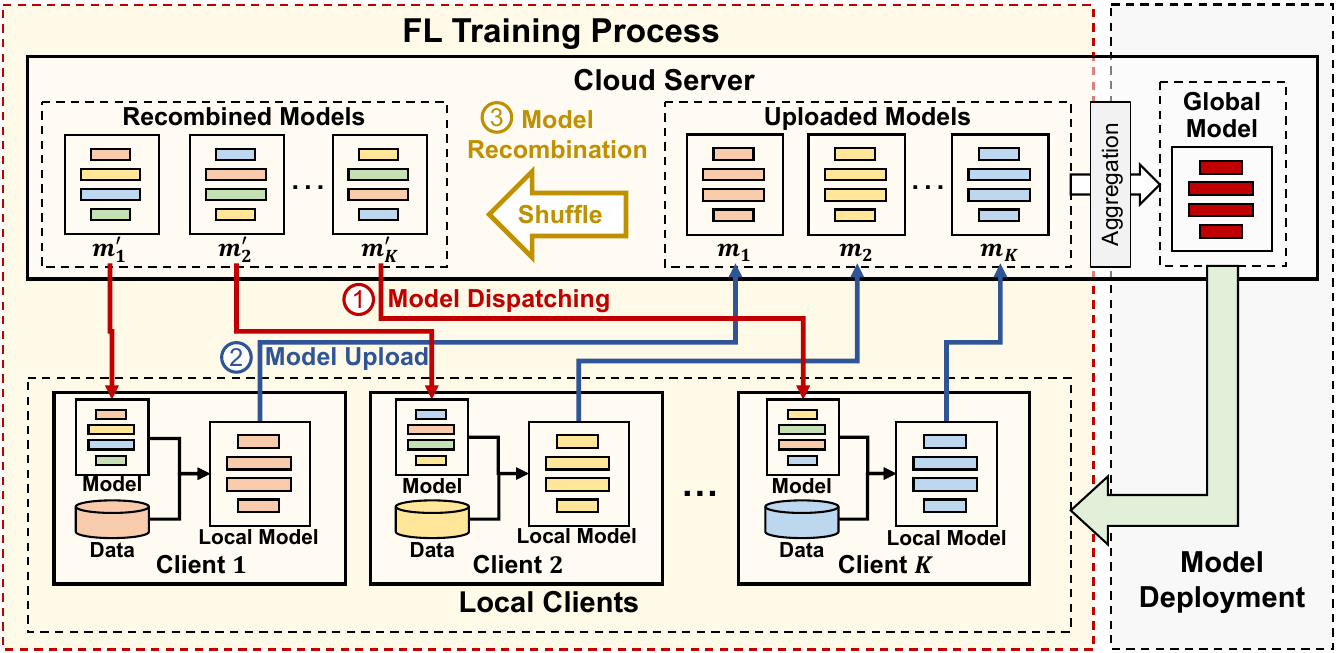}
  \caption{Our FedMR approach}
		\label{fig:framework} 
	\end{center}
\end{figure} 

\section{FedMR Approach}

\subsection{Overview of FedMR}

Figure~\ref{fig:framework} presents the framework and workflow of FedMR, which consists of two process, i.e., FL training process and the model deployment process.
In FL training process, unlike FedAvg-based methods that use the same global model for local training, FedMR adopts multiple homogeneous intermediate models for local training, where each client model is dispatched one model.
Specifically, each training round involves three specific steps as follows: 
\begin{itemize}
\setlength\itemsep{-0.1em}
    \item 
    \textbf{Step 1 (Model Dispatching):} The cloud server dispatches $K$ intermediate models to 
    $K$ selected clients, respectively, according to their indices, where $K$ denotes the number of activated clients participating in each FL training round. Note that in FedMR different clients will receive different models for the local training.
    
    
    \item \textbf{Step 2 (Model Upload):} 
    Once its local training is finished, a client needs to upload the parameters of its newly updated
    local model to the cloud server.
    
    \item \textbf{Step 3 (Model Recombination):} 
    The cloud server decomposes received   
    local models into multiple layers individually
    in the same manner and conducts the random shuffling of the same layers among different local models. Then, by concatenating layers
    from different sources in order,  a new local model can be 
    reconstructed. Note that any decomposed layer of the uploaded model will eventually be used by one and only one of the 
    recombined models. 
\end{itemize}

In the model deployment process, the cloud server aggregates all the intermediate models to generate a global model and deploys the global model to all the local clients for specific tasks.

\subsection{Implementation of FedMR}

Algorithm~\ref{alg:fedmr} details  
the implementation of FedMR.
 Line~\ref{line:init} initializes the model list $L_m$, which includes $K$ initial models. Lines~2-10 performs  
$rnd$ rounds of  FedMR training.
In each round,  Line~3  selects $K$ random clients to participate in the model training and creates a client list  $L_r$.
Lines 4-7 conduct the local training on clients in parallel, 
where Line 5 applies the local model $L_m[i]$ on client $L_r[i]$ for local training by using the
function {\it ClientUpdate},
and Line 6 achieves a new local model after the local training. 
After the cloud server receives 
all the  $K$ local models,  Line~8 uses the 
function {\it ModelRcombine}  to recombine local models and 
generate $K$ new local models, 
which are saved in $L_m$ as shown in Line 9.
Finally, Lines 11-12 will report   
an optimal global model that is generated based on $L_m$.
Note that the cloud server will dispatch the global model to all
the clients for the  purpose of inference
rather than local training. 
The following parts  will detail the 
key components of FedMR.
Since FedMR cannot adapt to the secure aggregation mechanism~\cite{bonawitz2017practical}, to further protect privacy, we present an extended secure recombination mechanism in Appendix~\ref{sec:sec_mr}, which enables the exchange of partial layers among clients before local training or model uploading to ensure that the cloud server cannot directly obtain the gradients of each local model.

\begin{algorithm}[t]
\caption{Implementation of  FedMR}
\label{alg:fedmr}
\begin{flushleft}
\textbf{Input}:
i) $rnd$, \# of training rounds; 
ii) $S_{c}$, the set of clients; 
iii) $K$, \# of clients participating in each FL round.\\
\textbf{Output}: 
 $w^{glb}$, the parameters of trained global model.\\
\textbf{FedMR}($rnd$,$S_{dev}$,$K$)
\end{flushleft}
\begin{algorithmic}[1] 
\STATE $L_m\leftarrow [w^1_1, w^2_1,...,w^{K}_1]\ \ \ $ // initialize model list  \;\label{line:init}
\FOR{$r$ = 1, ..., $rnd$}\label{line:trainStart}
\STATE $L_r\leftarrow$ Random select $K$ clients from $S_{c}$\;\label{line:clientSel}
\\ /*parallel for block*/
\FOR{$i$ = 1, ..., $K$}   
\STATE $v_{r+1}^i\leftarrow ${\it ClientUpdate}$(L_m[i],L_r[i])$\;\label{line:clientUpdate}
\STATE $L_m[i]\leftarrow v_{r+1}^i$\;
\ENDFOR
\STATE $[w_{r+1}^1,w_{r+1}^2,..., w_{r+1}^K]\leftarrow ${\it ModelRcombine}$(L_m)$\label{line:modelRecomb}\;
\STATE $L_m\leftarrow [w_{r+1}^1,w_{r+1}^2,..., w_{r+1}^K]$\;
\ENDFOR\label{line:trainEnd}
\STATE $w^{glb}\leftarrow \frac{1}{K}\sum_{i = 1}^{K} w_{rnd+1}^{i}$ \label{line:modelAggr}\;
\STATE \textbf{return} $w^{glb}$\;
\end{algorithmic}
\end{algorithm}

\subsubsection{Local Model Training}
Unlike conventional FL methods that 
conduct local  
training on clients starting from the same 
aggregated  model, in each training round FedMR uses
different recombined models (i.e., $K$ models in the 
model list $L_m$) for the local training purpose. 
Note that, in the whole training phase,
FedMR only uses  $K$ ($K\leq |S_c|$) models, since there are only $K$ devices activated in each training round. 
Let $w^c_r$ be the parameters of some
model that is dispatched 
to the $c^{th}$ client in the $r^{th}$ training round. In the $r^{th}$ training round, we dispatch 
the $i^{th}$ model in $L_m$  to its corresponding client 
using
$w^{L_r[i]}_{r} = L_m[i]$. 
Based on the  recombined model, FedMR conducts the local training on client $L_r[i]$ as follows:
\begin{equation}
\begin{split}
v^{L_r[i]}_{r+1} &=w^{L_r[i]}_{r} - \eta \nabla f_{L_r[i]}(w^{L_r[i]}_{r}),\\
s.t., \ f_{L_r[i]}(w^{L_r[i]}_{r}) &= \frac{1}{|D_{L_r[i]}|} \sum_{j = 1}^{|D_{L_r[i]}|} \ell (w^{L_r[i]}_{r};x_j;y_j),
\end{split}
\nonumber
\end{equation}
where $v^{L_r[i]}_{r}$ indicates parameters of the trained local model, $D_{L_r[i]}$ denotes
the dataset of client $L_r[i]$, 
$\eta$ is the  learning rate, $\ell(\cdot)$ is the loss function, $x_j$ is the $j^{th}$ sample in $D_{L_r[i]}$, and $y_j$ is the label of $x_j$.
Once the local training is finished, 
 the  client needs to upload the parameters of 
 its trained 
  local model  to the cloud server by updating 
  $L_m$ using 
$L_m[i] = v^{L_r[i]}_{r+1}$. 
Similar to traditional FL methods, 
in each training round, FedMR needs to transmit 
the parameters of $2K$ models between the cloud server and its selected clients. 

\begin{figure}[t] 
	\begin{center} 
	\includegraphics[width=0.42\textwidth]{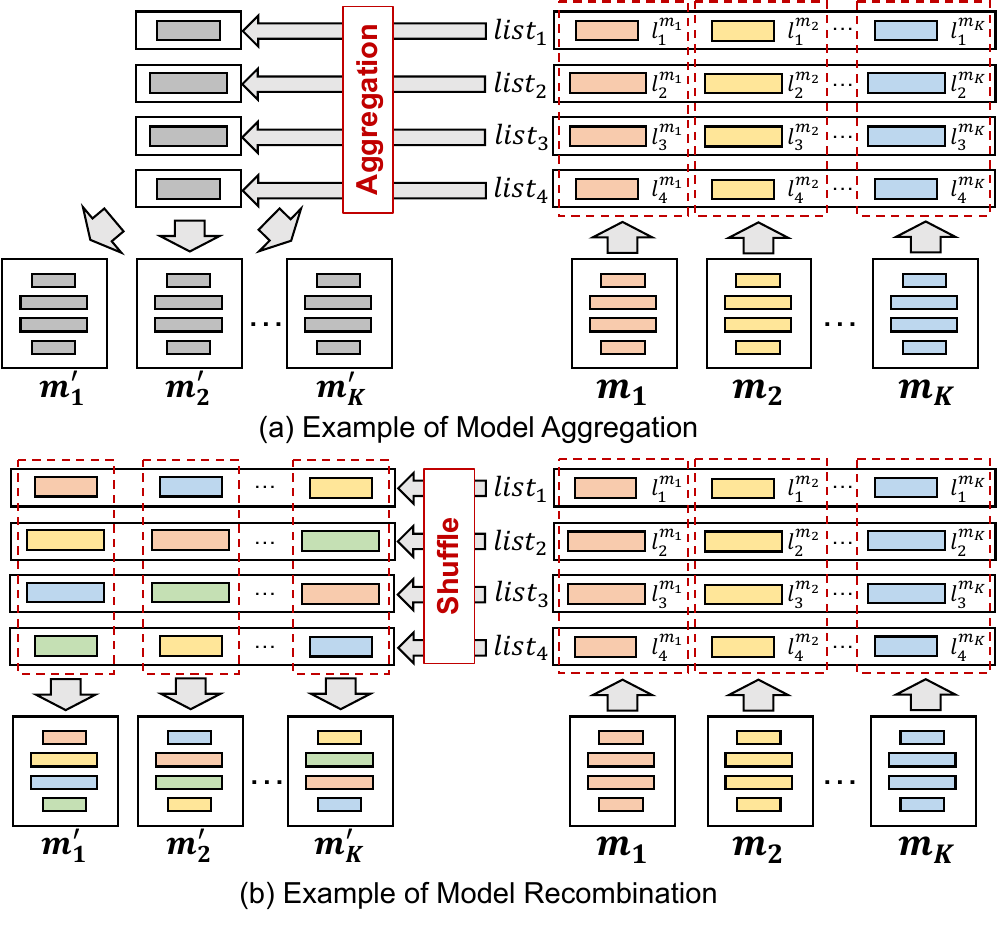}
  \caption{Example of model aggregation and recombination}
		\label{fig:mr_example} 
	\end{center}
\end{figure}

\subsubsection{Model Recombination}
Typically, a DL model consists of multiple layers, e.g., convolutional layers, pooling layers, and Fully Connected (FC) layers. 
To simplify the description of our model recombination method, we do not explicitly present the layer types here. 
Let $w_x=\{l_1^x,l_2^x,...,l_n^x\}$ be
the parameters of  model $x$, where $l_i^x$ ($i\in [n]$) denotes the  parameters of the $i^{th}$ layer of model 
$x$. 
%


In each FL round,  FedMR needs to perform model recombination based on $L_m$ to obtain new models for local training. 
Figure~\ref{fig:mr_example} presents an example of model aggregation and model recombination.
When clients receive all the trained local models (i.e., $m_1$, $m_2$, ..., $m_K$), the cloud server needs to decouple the layers of these models individually.
Note that since all the local models are homogeneous, the corresponding layers of the local models have the same structure.
For example, the model $m_1$ can be decomposed into 
four layers. Assuming that the local models are with an architecture of $w$, to enable recombination, FedMR then constructs $n$ lists, where the $k^{th}$ ($k\in [n]$) list contains all the $k^{th}$ layers of the models in $L_m$.
As an example shown in Figure~\ref{fig:mr_example}(b), FedMR constructs four lists (i.e., $list_1$-$list_4$)
for the $K$ models (i.e., $m_1$-$m_K$), where each list consists of $K$ elements (i.e.,  $K$ layers with the same index). 
After shuffling each list, FedMR generates $|L_m|$ recombined models based on shuffled results.  
For example, the top three layers of the recombined model $m^\prime_1$ come from the models $m_1$, $m_2$ and $m_K$, respectively.
For comparison, Figure~\ref{fig:mr_example}(a) presents an example of model aggregation, which aggregates the layers of each list to generate an aggregated model.
The aggregation-based methods dispatch the aggregated model to $K$ clients, while FedMR dispatches a different recombined model to each client.



\subsection{Two-Stage Training Scheme for FedMR} 

Although  FedMR enables finer
 FL training, when  starting from blank models,  
 FedMR converges more slowly than 
traditional FL methods at the beginning. This is mainly because, due to the low matching degree between layers in the recombined models,
the model recombination operation in this stage 
requires more local training time to re-construct the new dependencies between layers. To accelerate the overall convergence, we 
propose a  two-stage training scheme for FedMR, consisting of the  {\it aggregation-based pre-training stage} and  {\it model recombination stage}.
 In the first stage, we 
train the local models coarsely  using the 
 FedAvg-based aggregation, which can quickly form a pre-trained 
 global model. In the second stage, from the
 pre-trained models, FedMR dispatches recombined models to clients
 for local training. Due to the synergy 
  of both FL paradigms, the overall FedMR training time can be reduced.

\subsection{Convergence Analysis}


Based on the assumptions posed on the loss functions of local clients in FedAvg~\cite{convergence}, this subsection performs the convergence analysis for FedMR. 

\begin{assumption}\label{asm1}
For $i \in\{ 1, 2, \cdots, K\}$, $ f_i $ is L-smooth satisfying $|| \nabla f_i(x) - \nabla f_i(y) || \leq L ||x - y||$.
\end{assumption}
\begin{assumption}\label{asm2}
For $i \in\{ 1, 2, \cdots, K\}$, $ f_i $ is $\mu$-strongly convex satisfying $f(x)\geq f(y) + (x-y)^T\nabla f(y) + \frac{\mu}{2} ||x-y||^2$, where $\mu \ge 0$.

\end{assumption}
\begin{assumption}\label{asm3}
The variance of stochastic gradients is upper bounded by  $\theta^2$, and the expectation of squared norm of stochastic gradients is upper bounded by  $G^2$, i.e.,  $\mathbb{E}||\nabla f_k (w;\xi) - \nabla f_k (w) ||^2 \leq \theta^2$, $\mathbb{E}||\nabla f_k (w;\xi) ||^2 \leq G^2$, where $\xi$ is a data batch of the $k^{th}$ client in the $t^{th}$ FL round.
\end{assumption}

Based on the implementation of  function {\it ModelRecombine($\cdot$)}, 
we  derive the following two lemmas for the model recombination:
\begin{lemma}\label{key_lamma1}
Assume that in FedMR
there are $K$ clients participating in every FL training round. 
Let $\{v^1_r,v^2_r,..,v^K_r\}$ and $\{w^1_r,w^2_r,..,w^K_r\}$ be
the set  of trained local model weights and the set of recombined model weights 
generated in the $(r-1)^{th}$ round, respectively. Assume $x$ is a vector with the same size as that of  $v_{r}^k$. We have
\begin{equation}\label{eq:mr1}
\scriptsize
\sum_{k=1}^K v_{r}^k= \sum_{k=1}^K w_{r}^k,
\  and \ 
\sum_{k=1}^K||v_{r}^k-x||^2=\sum_{k=1}^K||w_{r}^k-x||^2.
\nonumber
\end{equation}
\end{lemma}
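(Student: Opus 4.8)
The plan is to exploit the single structural property that defines \textit{ModelRecombine}: at each layer index the operation is nothing more than a permutation of the layer parameters across the $K$ models. Consequently any quantity that is additive across layers and symmetric in the models is left untouched, and both identities fall out as instances of this one principle. First I would fix notation following the paper, writing each trained model as the concatenation of its $n$ layers, $v_r^k=\{l_1^{v^k},\dots,l_n^{v^k}\}$, and likewise $w_r^k=\{l_1^{w^k},\dots,l_n^{w^k}\}$ for the recombined models. The crucial fact, read directly off the description of \textit{ModelRecombine}, is that for each layer index $j\in[n]$ there is a permutation $\pi_j$ of $\{1,\dots,K\}$ with $l_j^{w^k}=l_j^{v^{\pi_j(k)}}$. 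This is exactly the guarantee that ``any decomposed layer of the uploaded model will eventually be used by one and only one of the recombined models'': shuffling each $list_j$ is a bijection.

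For the first identity I would sum layer by layer. Since $\pi_j$ is a bijection of $\{1,\dots,K\}$, for each fixed $j$ the sum $\sum_{k=1}^K l_j^{w^k}=\sum_{k=1}^K l_j^{v^{\pi_j(k)}}=\sum_{k=1}^K l_j^{v^k}$ is just a reindexing. Concatenating these $n$ layer-wise equalities in order reconstructs the full parameter vectors and yields $\sum_{k=1}^K w_r^k=\sum_{k=1}^K v_r^k$.

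For the second identity I would use that the squared Euclidean norm of a full parameter vector splits as the sum of the squared norms of its layers. Writing $x=\{x_1,\dots,x_n\}$ in the same layer decomposition, I have $\|v_r^k-x\|^2=\sum_{j=1}^n\|l_j^{v^k}-x_j\|^2$ and the analogous expression for $w_r^k$. Summing over $k$ and swapping the two finite sums gives $\sum_{k=1}^K\|v_r^k-x\|^2=\sum_{j=1}^n\sum_{k=1}^K\|l_j^{v^k}-x_j\|^2$, with the matching double sum for $w$. Applying the permutation $\pi_j$ inside each inner sum shows the two double sums agree, which proves $\sum_{k=1}^K\|v_r^k-x\|^2=\sum_{k=1}^K\|w_r^k-x\|^2$.

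The argument is elementary once the permutation structure is isolated, so the only real ``obstacle'' is careful bookkeeping rather than any analytic difficulty. The one point that genuinely matters is that each $\pi_j$ must be a \emph{bijection} applied \emph{independently} per layer: if layers could be duplicated or dropped the multiset of $j$-th layers would change and neither identity would survive. I would therefore state bijectivity explicitly at the start and emphasize that both claims are a single conservation law, namely that recombination preserves, for every layer, the multiset of that layer's parameters across the $K$ models, so every symmetric and layer-additive functional (here the identity and the squared distance to a fixed $x$) is conserved.
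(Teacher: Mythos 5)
Your proposal is correct and takes essentially the same route as the paper's own proof: decompose each model into its layers, observe that \textit{ModelRecombine} merely permutes the parameters of each layer index across the $K$ models, and obtain both identities by reindexing the resulting sums. If anything, your explicit per-layer bijections $\pi_j$ are a tighter formalization than the paper's, which encodes the shuffling property through existential containment conditions (Equations~\ref{eq:q2_1}--\ref{eq:q2_2}) that, read literally, are weaker than the multiset-preservation fact both arguments actually use.
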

We prove Theorem \ref{thm1} based on   Lemmas~\ref{key_lamma1}.  Please refer to Appendix \ref{sec:proof} for the proof. 
Note that different from FedAvg, Lemmas~\ref{key_lamma1} is the key lemma for the proof of FedMR.

\newtheorem{thm}{\bf Theorem}
\begin{thm}\label{thm1}
(Convergence of FedMR)
Let Assumption \ref{asm1},  \ref{asm2}, and \ref{asm3} hold.
Assume that $E$ is the number of SGD iterations conducted within one FL  round, model recombination is conducted at the end of each FL round, and 
the whole training terminates after $n$ FL rounds. 
Let $T=n\times  E$ be the total number of SGD iterations conducted so far, and $\eta_k=\frac{2} {\mu (T + \gamma)}$ be the learning rate. We can have 
\begin{equation}
\begin{split}
\mathbb{E}[f(\overline{w}_T)] -f^\star \leq \frac{L}{2\mu(T+\gamma)}[\frac{4B}{\mu} + \frac{\mu(\gamma+1)}{2}\Delta_1]
     \nonumber
\end{split},
\end{equation}
where
$
    B = 10 L \Gamma + 4(E - 1)^2 G^2, \overline{w}_T=\sum_{k=1}^K{w^k_T}.
     \nonumber
$
\end{thm}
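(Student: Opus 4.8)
The plan is to follow the now-standard perturbed-iterate framework for analyzing \texttt{FedAvg} on non-IID data (the same framework underlying the assumptions cited as \cite{convergence}) and to isolate the two places where model recombination replaces averaging, each handled by one half of Lemma~\ref{key_lamma1}. First I would introduce the \emph{virtual averaged iterate} $\overline{w}_t = \frac{1}{K}\sum_{k=1}^K w_t^k$ defined at \emph{every} SGD iteration $t \in \{1,\dots,T\}$, not merely at round boundaries, together with its one-step image $\overline{v}_{t+1} = \overline{w}_t - \eta_t \overline{g}_t$ where $\overline{g}_t = \frac{1}{K}\sum_k \nabla f_{k}(w_t^k;\xi_t^k)$. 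The target is $a_t = \mathbb{E}\lVert \overline{w}_t - w^\star\rVert^2$, and the goal is a one-line recursion $a_{t+1} \le (1-\mu\eta_t)\,a_t + \eta_t^2 B$ with $B = 10L\Gamma + 4(E-1)^2 G^2$.

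The first main step is the per-iteration descent lemma. Expanding $\lVert \overline{v}_{t+1} - w^\star\rVert^2$, taking expectations, and using unbiasedness of the stochastic gradients, the cross term is controlled by $\mu$-strong convexity (Assumption~\ref{asm2}) to produce the contraction factor $(1-\mu\eta_t)$, while the residual splits into a stochastic-variance piece bounded by $\theta^2$ and $G^2$ (Assumption~\ref{asm3}), a heterogeneity piece proportional to $\Gamma = f^\star - \frac{1}{K}\sum_k f_k^\star$, and a \emph{client-drift} piece $\frac{1}{K}\sum_k \mathbb{E}\lVert \overline{w}_t - w_t^k\rVert^2$ that arises because each gradient is evaluated at the local iterate $w_t^k$ rather than at $\overline{w}_t$; $L$-smoothness (Assumption~\ref{asm1}) converts this last discrepancy into the divergence term. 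This part is essentially verbatim from the \texttt{FedAvg} analysis and I expect no difficulty here.

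The recombination step is where Lemma~\ref{key_lamma1} does the real work, and I would insert it at each round boundary $t+1$. The first identity $\sum_k v_{t+1}^k = \sum_k w_{t+1}^k$ shows that recombination leaves the virtual average unchanged, so $\overline{w}_{t+1} = \overline{v}_{t+1}$ and the mean recursion passes through untouched---exactly the role that full-participation averaging plays in \texttt{FedAvg}. The second identity, applied with $x = \overline{w}_{t+1}$ (and, for the terminal conversion, with $x = w^\star$), shows that recombination preserves the summed second moment $\sum_k \lVert w_{t+1}^k - \overline{w}_{t+1}\rVert^2$, i.e. it merely permutes the inter-client dispersion layer-wise rather than inflating it. Combining these with the bounded-gradient assumption I would then bound the client-drift term by $4\eta_t^2(E-1)^2 G^2$, using that at most $E-1$ local steps separate any iterate from the preceding recombination point.

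The hard part will be precisely this drift bound. In \texttt{FedAvg} the averaging step \emph{resets} the dispersion $\sum_k\lVert w^k - \overline{w}\rVert^2$ to zero at every synchronization, which is exactly what caps the drift at the within-round contribution $O(\eta^2 E^2 G^2)$; under recombination Lemma~\ref{key_lamma1} only guarantees that this dispersion is \emph{preserved}, not annihilated. The delicate point is therefore to argue that preservation suffices: one must show that the second-moment quantity entering the descent lemma is exactly the permutation-invariant sum Lemma~\ref{key_lamma1} controls, and that the $(1-\mu\eta_t)$ contraction in the main recursion prevents the carried-over dispersion from accumulating across rounds, so that the drift bound holds uniformly in $t$ rather than growing with the round index. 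Once that uniform bound is secured, the endgame is routine: I would solve $a_{t+1}\le(1-\mu\eta_t)a_t+\eta_t^2 B$ by induction under the diminishing schedule $\eta_t = \frac{2}{\mu(t+\gamma)}$ to obtain $a_T = O(1/(T+\gamma))$ with $\Delta_1 = \lVert\overline{w}_1 - w^\star\rVert^2$ carrying the initialization, and finally invoke $L$-smoothness once more via $\mathbb{E}[f(\overline{w}_T)] - f^\star \le \frac{L}{2}a_T$ to reach the stated bound.
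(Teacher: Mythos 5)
Your proposal has a genuine gap at exactly the point you flag as ``the hard part,'' and the contraction argument you sketch there does not close it. Because you run the recursion on $a_t = \mathbb{E}\lVert \overline{w}_t - w^\star\rVert^2$, your one-step lemma produces the client-drift term $\frac{1}{K}\sum_k \mathbb{E}\lVert \overline{w}_t - w_t^k\rVert^2$, the dispersion of the local iterates around their mean. In FedAvg this term is bounded by $4\eta_t^2(E-1)^2G^2$ only because synchronization sets $w_{t_0}^k = \overline{w}_{t_0}$ at every round boundary, so the dispersion restarts from zero and can grow for at most $E-1$ steps. Under FedMR, recombination merely \emph{preserves} the dispersion (your own application of Lemma~\ref{key_lamma1} with $x=\overline{w}_{t+1}$ shows precisely this), and the recombined models are deliberately far apart---diversity is the point of the method---so the dispersion at a round start equals the dispersion at the end of the previous round and is not $O(\eta_t^2)$. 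The $(1-\mu\eta_t)$ contraction cannot rescue this: it acts on $a_t$, not on the dispersion, which is a separate quantity needing its own control. Indeed the dispersion equals $\frac{1}{K}\sum_k\lVert w_t^k-w^\star\rVert^2 - a_t$, so controlling it amounts to controlling the average per-client distance to the optimum, at which point your decomposition is circular.

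The paper sidesteps this trap by choosing a different Lyapunov function: $\Delta_t^{\sum} = \frac{1}{K}\sum_k\lVert w_t^k - w^\star\rVert^2$, the average per-client squared distance to the optimum, rather than the squared distance of the average. Jensen's inequality gives $\lVert\overline{v}_{t+1}-w^\star\rVert^2 \le \frac{1}{K}\sum_k\lVert v_{t+1}^k - w^\star\rVert^2$, and the drift term that then appears in the one-step lemma is $\frac{1}{K}\sum_k\lVert w_t^k - w_{t_0}^k\rVert^2$---each client's displacement from \emph{its own} iterate at the last recombination point, not from the mean. That term is bounded by $4\eta_t^2(E-1)^2G^2$ using only Assumption~\ref{asm3} (at most $E-1$ bounded-gradient steps), with no reset of any dispersion required. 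Lemma~\ref{key_lamma1} is then invoked with $x = w^\star$ to show that recombination leaves $\Delta_t^{\sum}$ unchanged, so the recursion $\Delta_{t+1}^{\sum}\le(1-\mu\eta_t)\Delta_t^{\sum}+\eta_t^2 B$ passes through round boundaries; the induction under $\eta_t = \frac{2}{\mu(t+\gamma)}$ and the final $L$-smoothness step then proceed exactly as you describe, with $\Delta_t\le\Delta_t^{\sum}$ transferring the bound to $\overline{w}_T$. If you re-anchor your drift term to each client's own round-start iterate and switch your recursion to $\Delta^{\sum}$, your argument becomes the paper's.
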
 


Theorem \ref{thm1} indicates that the difference between the current loss $f(\overline{w}_T)$ and the optimal loss $f^\star$ is inversely related to $t$. From Theorem \ref{thm1}, we can find that the convergence rate of 
 FedMR
is similar to that of FedAvg, which has been analyzed in \cite{convergence}.

\section{Experimental Results}
\subsection{Experimental Settings}\label{sec:settings}
To evaluate the effectiveness of FedMR, we implemented FedMR
on top of a  cloud-based
architecture. Since it is impractical to allow 
all the clients to get involved in the training processes simultaneously, 
we assumed that there are only 10\% of clients participating in the local training 
in each FL round. To enable fair comparison, all the investigated 
FL methods including FedMR set
SGD optimizer with a learning rate of 0.01 and a momentum of 0.9.
For each client,  we set the batch size of local training to 50,  and performed 
five epochs for each local training. All the experimental results were obtained 
from an Ubuntu workstation with Intel i9 CPU, 32GB memory, and NVIDIA  RTX 3080 GPU.

{\bf Baseline Method Settings.}
We compared the test accuracy of FedMR with seven baseline  methods, i.e.,  FedAvg~\cite{fedavg}, FedProx~\cite{fedprox}, FedGen~\cite{icml_zhuangdi_2021}, CluSamp~\cite{icml_yann_2021}, FedExP~\cite{jhunjhunwala2023fedexp}, FedASAM~\cite{fedasam}, and FedMut~\cite{hu2024fedmut}. 
Here,  FedAvg is the most classical FL method, while the other five methods are state-of-the-art (SOTA) representations of the four kinds of FL optimization methods introduced in the related work section.
Specifically, FedProx, FedExP, and FedASAM are global control variable-based methods, FedGen is a KD-based approach, CluSamp is a device grouping-based method, and FedMut is a mutation-based method.
For FedProx, we used a hyper-parameter $\mu$  to control the weight of its proximal term, where the best values of $\mu$  for CIFAR-10, CIFAR-100, and FEMNISTvare 0.01, 0.001, and 0.1, respectively.
For FedGen, we adopted the same server settings in \cite{icml_zhuangdi_2021}.
For CluSamp, the clients were clustered based on the model gradient similarity described in \cite{icml_yann_2021}.



{\bf Dataset  Settings.}
We investigated the performance of our approach on 
three well-known datasets, i.e., CIFAR-10,  CIFAR-100 \cite{data}, 
and FMNIST~\cite{leaf}. 
We adopted
the Dirichlet distribution~\cite{measuring} to control the heterogeneity of 
client data for both CIFAR-10  and CIFAR-100.
Here, the notation $Dir(\alpha)$ indicates a different Dirichlet distribution
controlled by $\alpha$, where a smaller $\alpha$ means higher data heterogeneity
of clients. Note that, different from datasets CIFAR-10 and CIFAR-100, 
the raw data of FEMNIST are naturally non-IID distributed. Since FEMNIST takes 
various kinds of data heterogeneity
into account, we did not apply the 
Dirichlet distribution on FEMNIST. 
For both CIFAR-10 and CIFAR-100, we assumed that there are 100 clients in total participating
in FL. For FEMNIST,  we only considered one non-IID scenario 
involving 180 clients, where each client hosts more than 100 local
data samples.

{\bf Model  Settings.}
To demonstrate the pervasiveness of our approach, we 
developed different FedMR implementations
based on 
three different models (i.e., CNN, ResNet-20, VGG-16). 
Here, we obtained the CNN model from \cite{fedavg}, which 
consists of 
two convolutional layers and two FC layers. 
When conducting 
FedMR based on the CNN model, we directly applied
the model recombination for local training on 
it without pre-training a global model, since CNN here  
only has four layers. 
We obtained both ResNet-20 and VGG-16 models from Torchvision \cite{models}. ·  
When performing
FedMR based on ResNet-20 and VGG-16, 
due to the deep structure of both models, 
we adopted the two-stage training scheme, where the first stage 
lasts for 100 rounds 
to obtain a pre-trained global model.


\subsection{Validation for Intuitions}

\textbf{Independent Training.}
Based on the settings presented in Section~\ref{sec:settings}, we conducted the experiments to  evaluate
the effectiveness of each local model in Indep.
The FL training is based on the ResNet-20 model and dataset CIFAR-10, where we set  $\alpha= 0.5$ for non-IID scenarios. 
Figures~\ref{fig:obs}
compares Indep with  FedAvg from the perspectives of 
both test loss and inference accuracy. 
Due to the space limitation, for Indep here we only  
present the results of its four random local models (denoted by  Model-1, Model-2, Model-3, and Model-4).
To enable  a fair comparison with FedAvg, 
although there is no aggregated global model in  Indep,
we  considered 
 the aggregated model of all its local models for
 each FL round, whose results are indicated by the notion ``IndepAggr''. 
From Figure~\ref{fig:obs}, we can find 
all the local models in Indep can achieve higher accuracy and lower loss than those of FedAvg, though their loss and accuracy curves fluctuate more sharply.
Moreover, IndepAggr exhibits much worse performance than the other references. 
This is mainly because, according to the definition of Indep,  each local model needs to traverse multiple 
clients along with the FL training processes, where  the 
optimization directions of client models differ in the corresponding loss landscape. 

\begin{figure}[htp]
\centering
	\subfigure[Test Loss]{
		\centering
		\includegraphics[width=0.22\textwidth]{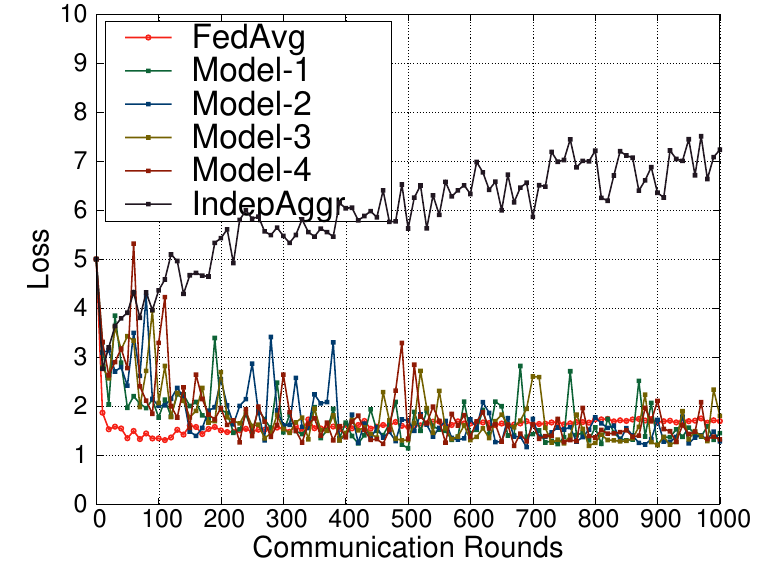}
	}
    \subfigure[Accuracy]{
		\centering
		\includegraphics[width=0.22\textwidth]{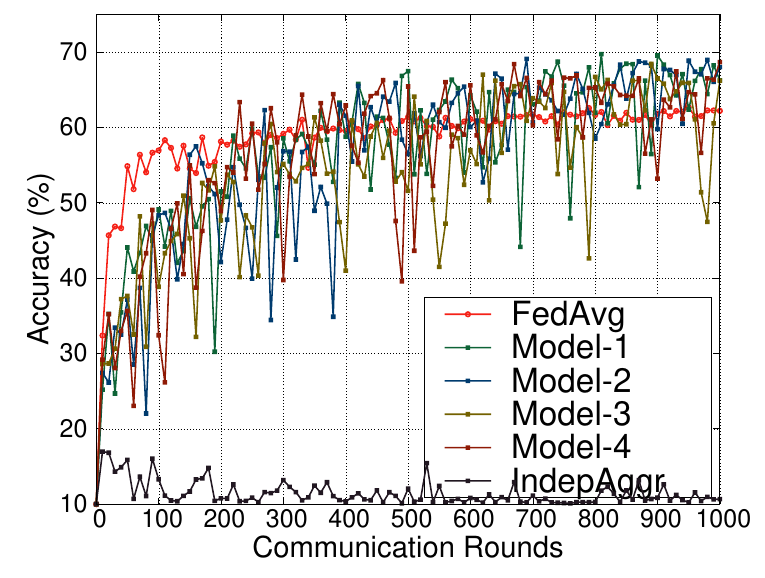}
	}
 \caption{FedAvg vs. Indep.} 
 \label{fig:obs}
\end{figure}

\textbf{Model Recombination.}
To validate the intuition  about the 
 impacts of model recombination as presented in Section~\ref{sec:motivation}, we conducted 
three experiments on CIFAR-10 dataset using ResNet-20 model.
Our goal is to figure out the following three  questions:
i) by using model recombination, can all the models  eventually  have the  same optimization direction;
 ii) compared with FedAvg, can the global model of FedMR  eventually converge into a more flat solution; and
 iii) can the global model of FedMR eventually converge to a more generalized solution?

\begin{figure}[htp]
\begin{center} 		\includegraphics[width=0.22\textwidth]{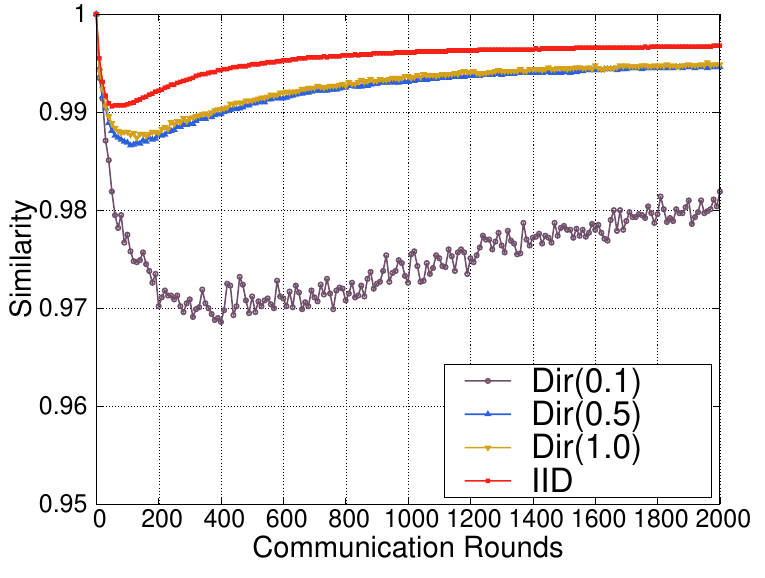}
	\caption{Cosine similarity of local models in FedMR.}
	\label{fig:intuition_sim}
 \end{center}
\end{figure}
 
Figure~\ref{fig:intuition_sim} presents the average cosine similarity between all the intermediate models, taking four different client data distributions into account.
We can observe that the model similarity decreases first and gradually increases in all the investigated IID and non-IID scenarios. 
Due to the nature of 
Stochastic Gradient Descent (SGD) mechanism and the data heterogeneity among clients, 
all local models are optimized toward different directions at the beginning of training. However, as the training progresses, most local models will be located in the same flat valleys, leading to similar optimization directions for local models. 
These results are consistent with our intuition as shown in 
Figure~\ref{fig:motivation_b}.

\begin{figure}[h]
\centering
\subfigure[FedAvg w/ $\alpha=0.1$]{
		\centering
		\includegraphics[width=0.2\textwidth]{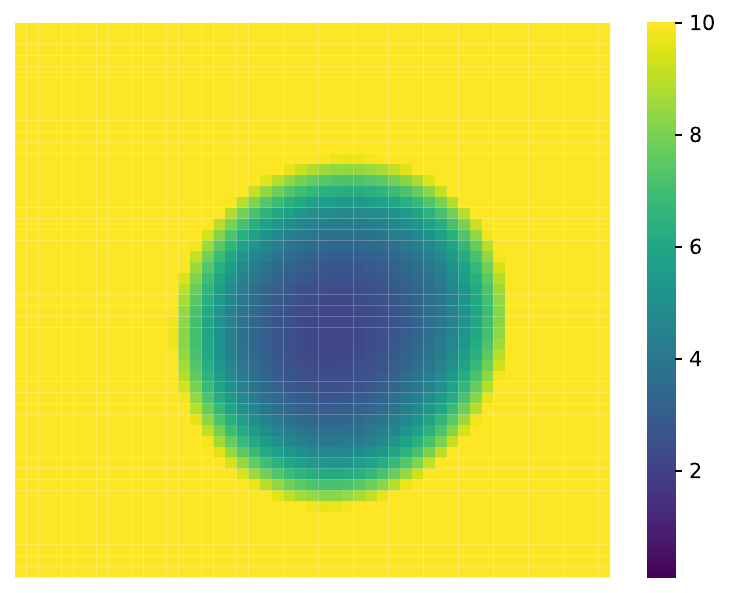}
	}\hspace{-0.1in}
    \subfigure[FedMR w/  $\alpha=0.1$]{
		\centering
		\includegraphics[width=0.2\textwidth]{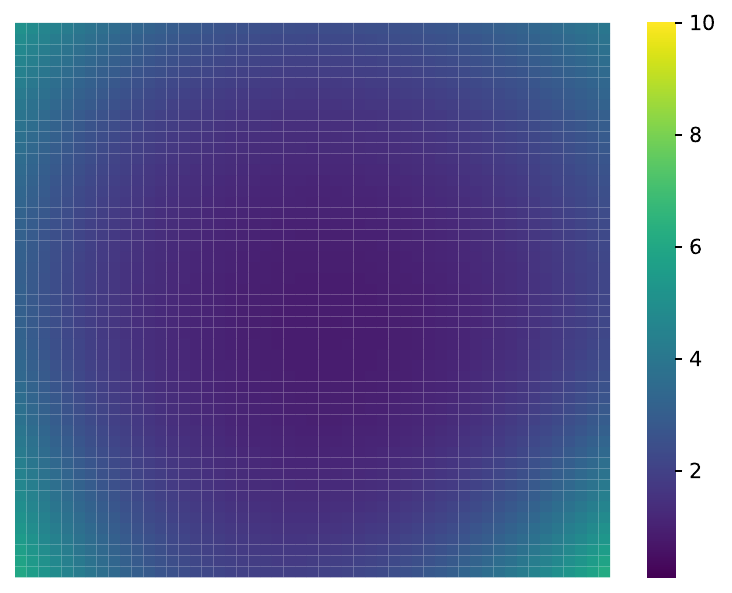}
	}\hspace{-0.1in}
	\subfigure[FedAvg w/  $\alpha=1.0$]{
		\centering
		\includegraphics[width=0.2\textwidth]{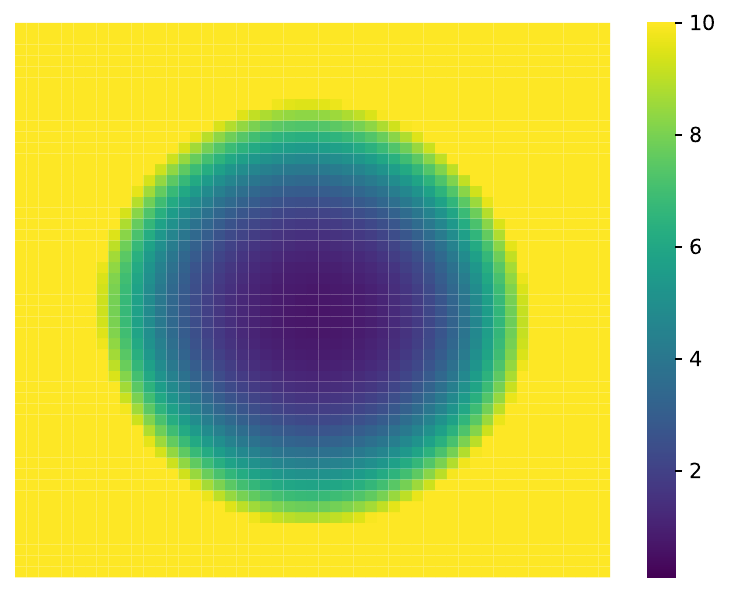}
	}\hspace{-0.1in}
    \subfigure[FedMR w/  $\alpha=1.0$]{
		\centering
		\includegraphics[width=0.2\textwidth]{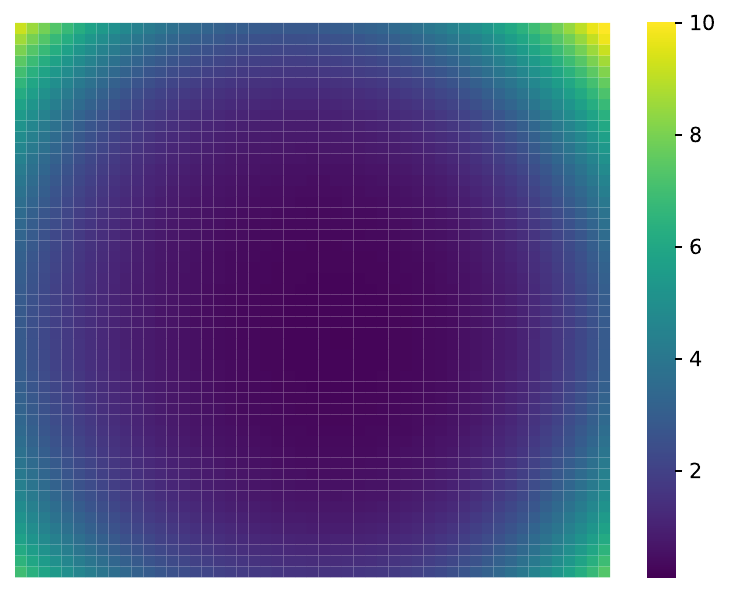}
	}
 \caption{Comparison of loss landscapes with different FL and client data settings.} 
 \label{fig:val_landscape}
\end{figure}

\begin{figure}[t]
	\centering
	\subfigure[$\alpha=0.1$]{
		\centering
		\includegraphics[width=0.2\textwidth]{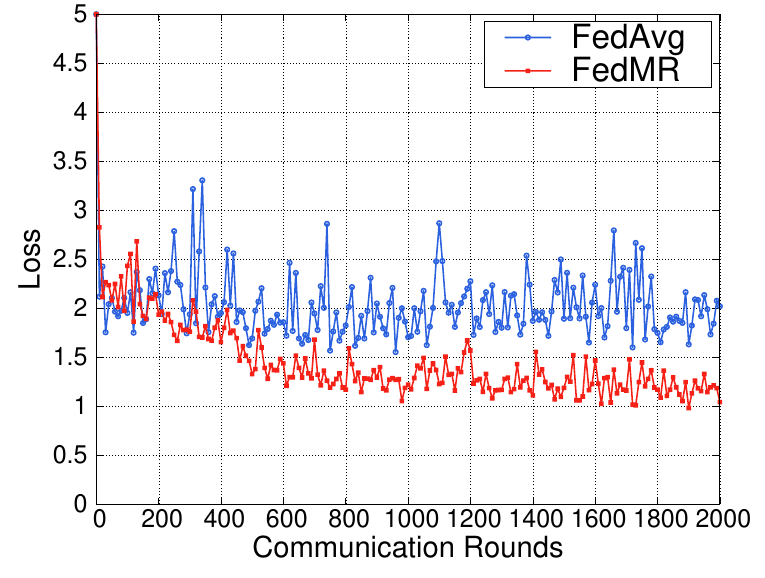}
		\label{fig:loss-d0.1}
	}\hspace{-0.1in}
	\subfigure[$\alpha=0.5$]{
		\centering
		\includegraphics[width=0.2\textwidth]{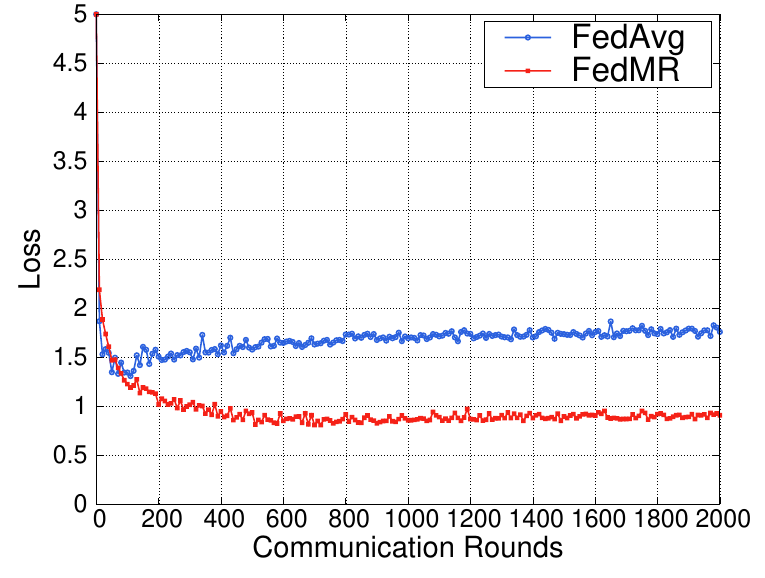}
		\label{fig:loss-d0.5}
	}\hspace{-0.1in}
	\subfigure[$\alpha=1.0$]{
		\centering
		\includegraphics[width=0.2\textwidth]{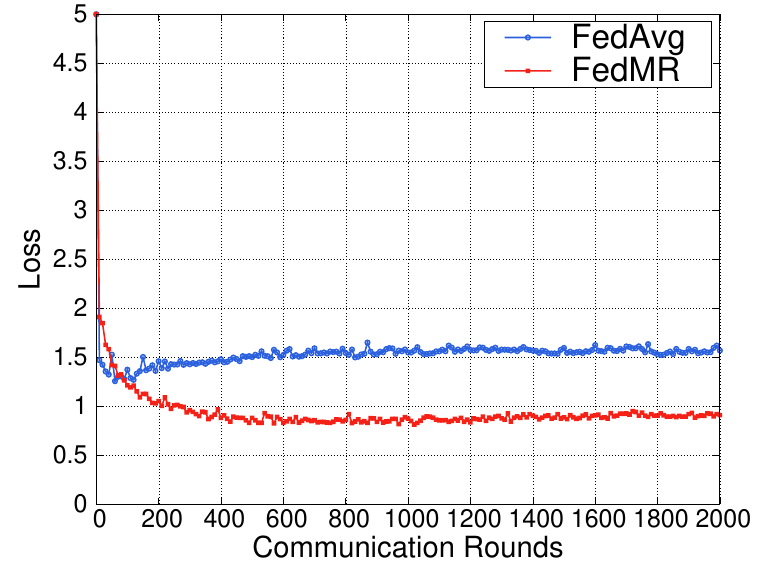}
		\label{fig:loss-d1.0}
	}\hspace{-0.1in}
	\subfigure[IID]{
		\centering
		\includegraphics[width=0.2\textwidth]{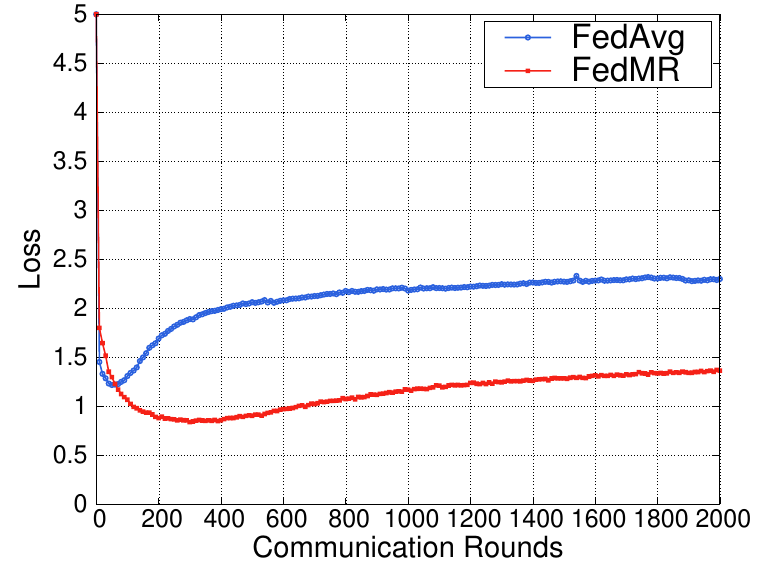}
		\label{fig:loss-iid}
	}
	\caption{Comparison of test losses of FedAvg and FedMR with different client data settings.}
	\label{fig:intuition_loss}
\end{figure}
		
		
Figure~\ref{fig:val_landscape} compares 
the loss landscapes of final global models obtained by FedAvg and FedMR with different client data settings, respectively. 
We can find
that, compared with FedMR counterparts,
the global models trained by FedAvg are located in sharper solutions, indicating the generalization superiority of final global models achieved by FedMR.


Figure~\ref{fig:intuition_loss} compares
the test losses for 
the global models of FedAvg and FedMR (without using two-stage training) within different IID and non-IID scenarios.
Note that here, the global models of FedMR are only for the purpose of fair comparison rather than local model initialization.  Due to the superiority in generalization, we can observe that the models trained by FedMR outperform those by FedAvg for all four cases.

\begin{table*}[t]

\centering
\caption{Test accuracy comparison for both non-IID and  IID scenarios using three DL models}
\label{tab:acc}
\setlength{\tabcolsep}{2 pt}
\begin{tabular}{|c|c|c|c|c|c|c|c|c|c|c|}
\hline
\multirow{2}{*}{Model} & \multirow{2}{*}{Datas.} & Heter. & \multicolumn{8}{c|}{Test Accuracy (\%)} \\
\cline{4-11}
 & & Set. & FedAvg & FedProx & FedGen & CluSamp & FedExP & FedASAM & FedMut & {\bf FedMR} \\
\hline
\hline
\multirow{9}{*}{CNN} & \multirow{4}{*}{Cifar-10} & $0.1$& $50.66\pm 1.18$& $50.61\pm 1.21$ & $ 50.27\pm 2.03 $ & $50.24\pm 1.04$  & $53.20\pm 1.33$ & $48.83\pm 1.86$ & \underline{$52.88\pm 0.59$} & ${\bf 54.63\pm 0.55}$\\
                      &  & $0.5$ & $54.42\pm 0.76$ & $54.28\pm 1.29$ & $53.66\pm 0.68$  & $55.07\pm 0.93$ & $55.12\pm 0.57$ & $52.02\pm 0.43$ & \underline{$57.29\pm 0.61$} & ${\bf 59.81\pm 0.60}$\\
                      &  & $1.0$ & $57.03\pm 0.62$ & $56.44\pm 0.68$ & $56.19\pm 0.54$  & $56.16\pm 0.56$ & $57.12\pm 0.34$ & $55.08\pm 0.61$ & \underline{$58.88\pm 0.47$} & ${\bf 60.77\pm 0.48}$\\
                      &  & $IID$ & $57.21\pm 0.27$ & $57.00\pm 0.11$ & $57.35\pm 0.19$ & $58.13\pm 0.35$ & $57.05\pm 0.15$ & $54.60\pm 0.11$ & \underline{$58.81\pm 0.21$} & ${\bf 63.74\pm 0.18}$\\
\cline{2-11}
& \multirow{4}{*}{Cifar-100} & $0.1$ & $29.12\pm 0.52$ & $29.44\pm 0.73$ & $26.14\pm 0.92$  & $28.51\pm 1.14$  & $30.43\pm 0.39$ & $28.95\pm 0.48$ & \underline{$31.23\pm 0.35$} & ${\bf 34.47\pm 0.54}$\\
                     &  & $0.5$     & $32.41\pm 0.77$ & $32.48\pm 0.81$ & $29.19\pm 0.67$  & $32.63\pm 0.60$ & $33.12\pm 0.51$ & $32.06\pm 0.98$ & \underline{$34.46\pm 0.69$} & ${\bf 37.41\pm 0.30}$\\
                     &  & $1.0$      & $32.66\pm 0.51$ & $33.10\pm 0.41$ & $29.94\pm 0.51$  & $32.65\pm 0.48$ & $33.32\pm 0.38$ & $32.45\pm 0.61$ & \underline{$35.12\pm 0.42$} & ${\bf 39.15\pm 0.30}$\\
                     &  & $IID$ & $32.75\pm 0.20$ & $32.57\pm 0.21$ & $30.95\pm 0.32$ & $32.77\pm 0.11$ & $32.48\pm 0.14$ & $32.35\pm 0.29$ & \underline{$34.49\pm 0.23$} & ${\bf 40.64\pm 0.17}$\\
\cline{2-11}
& \multirow{1}{*}{FEMNIST} & $-$ & $82.97\pm 0.37$ & $83.15\pm 0.41$ & $82.35 \pm 0.40$  & $82.31\pm 0.32$ & $83.28\pm 0.29$ & $83.49\pm 0.27$ & \underline{$83.62\pm 0.33$} & ${\bf 83.76\pm 0.24}$ \\
\hline

\hline
\multirow{9}{*}{ResNet-20} & \multirow{4}{*}{Cifar-10} & $0.1$& $42.14\pm 3.91$& $43.25\pm 3.18$ & $44.19\pm 2.42$  & $41.64\pm 2.04$ & $45.18\pm 2.43$ & $45.22\pm 4.06$ & \underline{$50.75\pm 1.85$} & ${\bf 59.20\pm 1.22}$\\
                      &  & $0.5$ & $58.70\pm 0.86$ & $59.33\pm 0.77$ & $60.64\pm 0.83$  & $58.74\pm 0.82$ & $59.74\pm 0.92$ & $63.49\pm 1.10$ & \underline{$63.34\pm 0.70$} & ${\bf 72.41\pm 0.17}$\\
                      &  & $1.0$ & $64.33\pm 0.25$ & $64.75\pm 0.33$ & $64.41\pm 0.29$  & $63.42\pm 0.45$ & $64.48\pm 0.31$ & $67.62\pm 0.41$ & \underline{$68.09\pm 0.25$} & ${\bf 75.16\pm 0.31}$\\
                      &  & $IID$ & $65.72\pm 0.22$ & $65.95\pm 0.23$ & $66.31\pm 0.23$ & $65.36\pm 0.18$ & $65.58\pm 0.24$ & $69.65\pm 0.10$ & \underline{$69.77\pm 0.19$} & ${\bf 77.48\pm 0.10}$\\
\cline{2-11}
& \multirow{4}{*}{Cifar-100} & $0.1$ & $34.22\pm 1.01$ & $34.52\pm 0.68$ & $35.76\pm 1.11$  & $33.23\pm 0.78$ & $36.09\pm 0.71$ & $37.31\pm 0.66$ & \underline{$38.79\pm 0.55$} & ${\bf 44.61\pm 1.48}$\\
                     &  & $0.5$     & $42.16\pm 0.43$ & $41.37\pm 0.49$  & $45.03\pm 0.96$ & $41.54\pm 0.52$ & $41.96\pm 0.56$ & $44.29\pm 0.52$ & \underline{$46.55\pm 0.55$} & ${\bf 54.26\pm 0.45}$\\
                     &  & $1.0$     & $43.32\pm 0.38$ & $43.00\pm 0.45$ & $46.60\pm 0.39$  & $43.63\pm 0.39$ & $43.68\pm 0.23$ & $46.74\pm 0.31$ & \underline{$48.41\pm 0.25$} & ${\bf 55.72\pm 0.36}$\\
                     &  & $IID$ & $45.14\pm 0.28$ & $45.40\pm 0.27$ & $48.33\pm 0.25$ & $44.76\pm 0.24$ & $45.04\pm 0.24$ & $48.59\pm 0.20$ & \underline{$48.65\pm 0.17$} & ${\bf 59.24\pm 0.32}$\\
\cline{2-11}
& \multirow{1}{*}{FEMNIST} & $-$ & $79.09\pm 0.54$ & $78.89\pm 0.50$ & $79.56\pm 0.34$  & $78.75\pm 0.27$  & $79.15\pm 0.34$ & \underline{$81.20\pm 0.41$} & $78.98\pm 0.45$ & ${\bf 81.27\pm 0.31}$ \\
\hline

\hline
\multirow{9}{*}{VGG-16} & \multirow{4}{*}{Cifar-10} & $0.1$& $62.28\pm 5.72$& $63.19\pm 5.15$ & $65.97\pm 3.82$  & $62.00\pm 3.19$ & $64.78\pm 5.69$ & $65.52\pm 4.96$ & \underline{$69.15\pm 2.16$} & ${\bf 74.49\pm 0.92}$\\
    &  & $0.5$ & $78.82\pm 0.21$ & $78.49\pm 0.26$ & $78.98\pm 0.11$  & $78.09\pm 0.48$ & $79.30\pm 0.43$ & $79.12\pm 0.25$ & \underline{$80.07\pm 0.19$} & ${\bf 84.24\pm 0.37}$\\
                      &  & $1.0$ & $79.53\pm 0.34$ & $79.52\pm 0.30$  & $80.08\pm 0.24$ & $79.67\pm 0.45$ & $79.76\pm 0.20$ & $80.08\pm 0.32$ & \underline{$80.85\pm 0.99$} & ${\bf 85.12\pm 0.13}$\\
                      &  & $IID$ & $79.96\pm 0.05$ & $79.79\pm 0.07$ & $80.13\pm 0.05$  & $79.66\pm 0.06$ & $79.89\pm 0.05$ & $80.66\pm 0.09$ & \underline{$82.20\pm 0.05$} & ${\bf 85.66\pm 0.15}$\\
\cline{2-11}
& \multirow{4}{*}{Cifar-100} & $0.1$ & $47.29\pm 0.96$ & $48.02\pm 0.68$ & $49.11\pm 1.60$  & $47.38\pm 1.47$ & $49.36\pm 0.55$ & $48.78\pm 1.03$ & \underline{$51.30\pm 1.00$} & ${\bf 55.33\pm 0.72}$\\
                     &  & $0.5$     & $55.60\pm 0.55$ & $55.45\pm 0.70$ & $56.29\pm 0.84$  & $54.45\pm 0.58$ & $56.40\pm 0.47$ & $56.73\pm 0.41$ & \underline{$58.02\pm 0.31$} & ${\bf 65.07\pm 0.25}$\\
                     &  & $1.0$      & $56.05\pm 0.45$ & $55.75\pm 0.36$ & $57.96\pm 0.32$  & $55.70\pm 0.37$ & $56.69\pm 0.25$ & $57.46\pm 0.25$ & \underline{$58.53\pm 0.31$} & ${\bf 65.66\pm 0.15}$\\
                     &  & $IID$ & $57.22\pm 0.28$ & $56.65\pm 0.23$  & $58.47\pm 0.16$ & $57.33\pm 0.17$ & $57.55\pm 0.16$ & $57.96\pm 0.11$ & \underline{$58.62\pm 0.08$} & ${\bf 66.33\pm 0.10}$\\
\cline{2-11}
& \multirow{1}{*}{FEMNIST} & $-$ & $83.96\pm 0.43$ & $84.27\pm 0.32$ & $84.39\pm0.28$ & $83.64\pm 0.27$  & $83.99\pm 0.32$ & \underline{$84.59\pm 0.21$} & $83.97\pm 0.57$ &  ${\bf 85.36\pm 0.21}$\\
\hline
\end{tabular}
\end{table*}

\begin{figure*}[h]
	\centering
	\subfigure[$\alpha=0.1$]{
		\centering
		\includegraphics[width=0.22\textwidth]{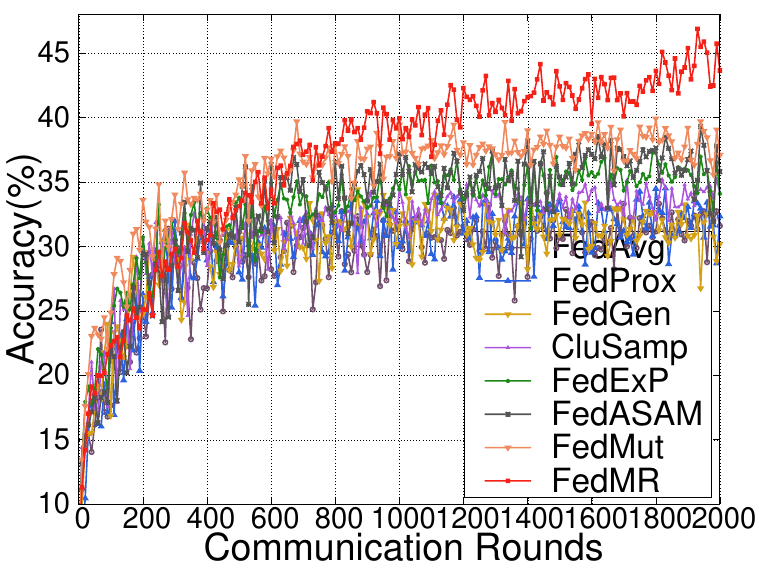}
		\label{fig:cifar-10-0.1}
	}\hspace{-0.1in}
	\subfigure[$\alpha=0.5$]{
		\centering
		\includegraphics[width=0.22\textwidth]{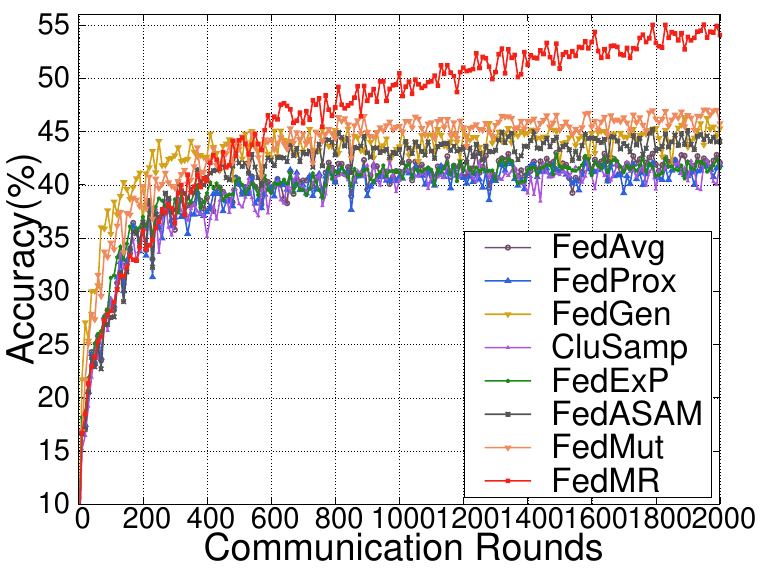}
		\label{fig:cifar-10-0.5}
	}\hspace{-0.1in}
	\subfigure[$\alpha=1.0$]{
		\centering
		\includegraphics[width=0.22\textwidth]{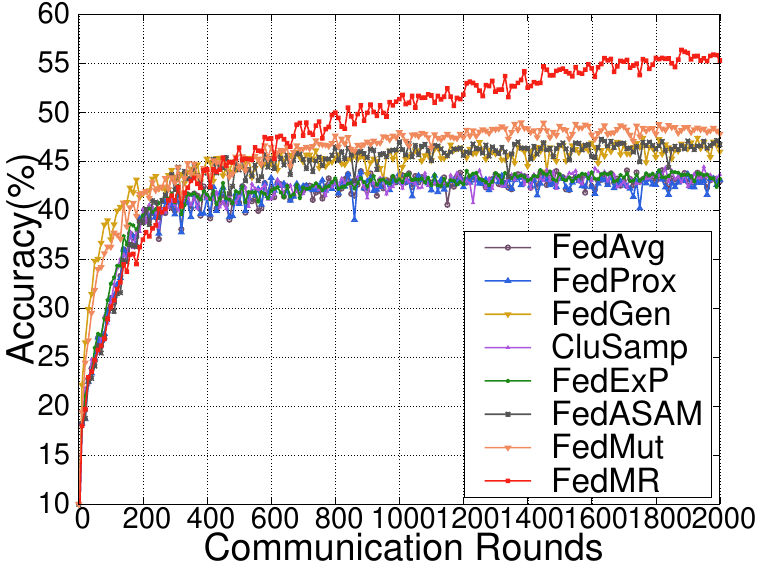}
		\label{fig:cifar-10-1.0}
	}\hspace{-0.1in}
	\subfigure[IID]{
		\centering
		\includegraphics[width=0.22\textwidth]{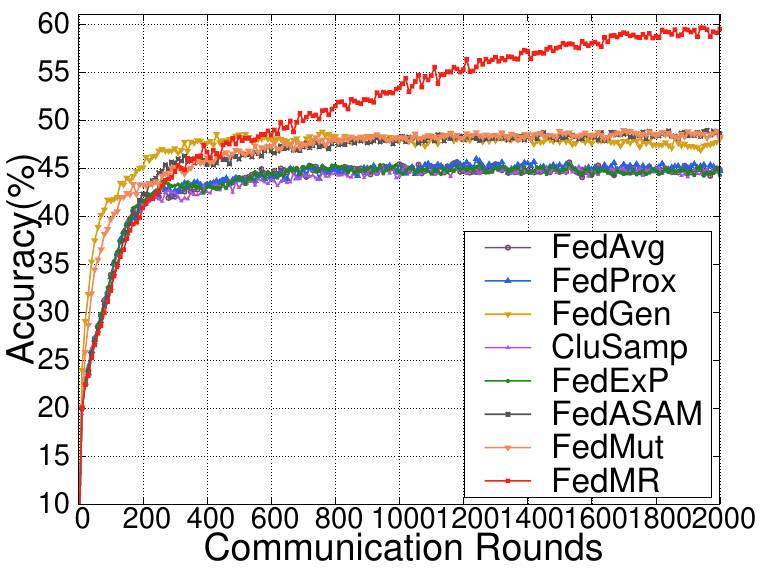}
		\label{fig:femnist}
	}
	\caption{Learning curves of FL methods based on  ResNet-20 model for CIFAR-100 dataset.}
	\label{fig:accuacy}
\end{figure*}

\begin{figure*}[t]
	\centering
	\subfigure[$K=5$]{
		\centering
		\hspace{-0.1in}
  \includegraphics[width=0.20\textwidth]{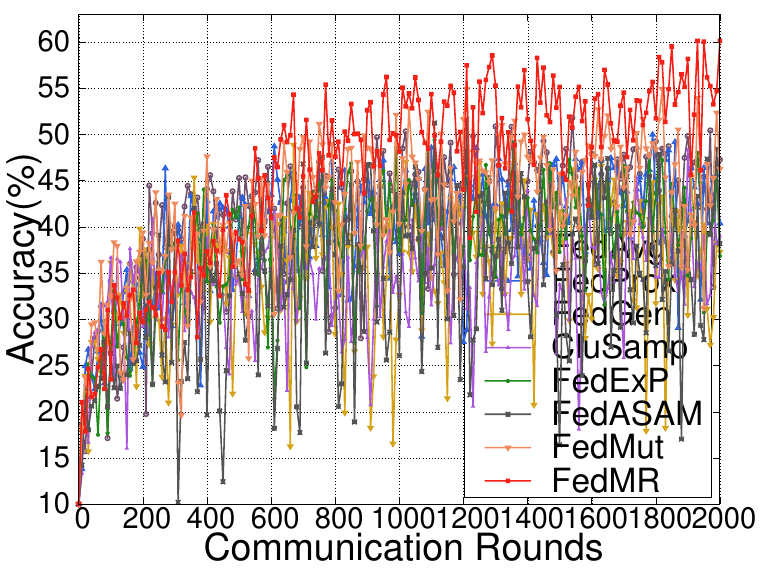}
		\label{fig:frac_0.05}
	}\hspace{-0.15in}
        \subfigure[$K=10$]{
		\centering
		\includegraphics[width=0.20\textwidth]{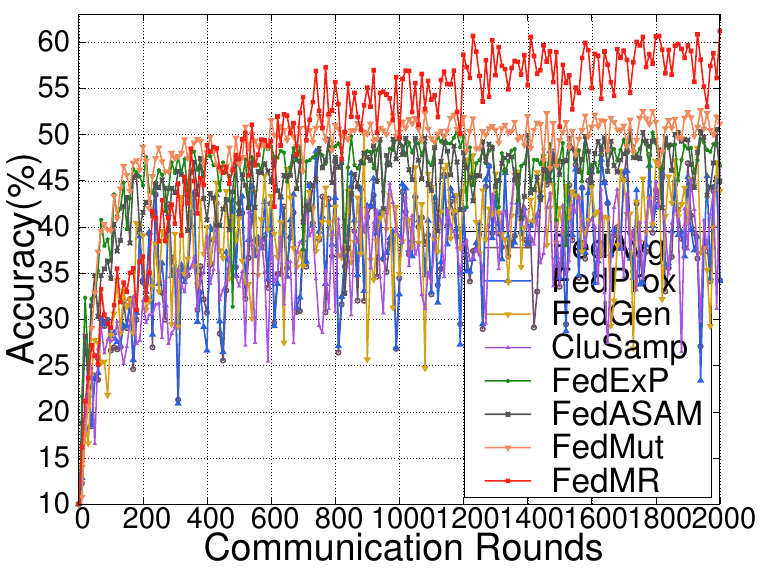}
		\label{fig:frac_0.1}
	}\hspace{-0.15in}
	\subfigure[$K=20$]{
		\centering
		\includegraphics[width=0.20\textwidth]{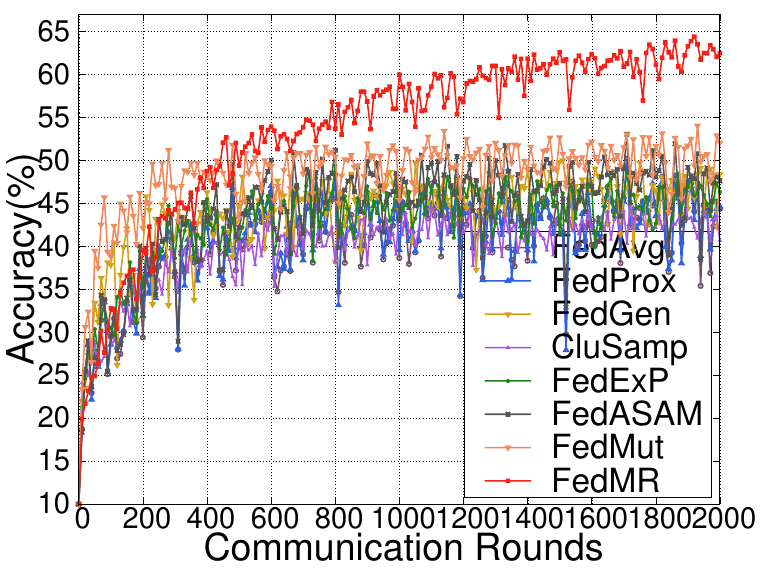}
		\label{fig:frac_0.2}
	}\hspace{-0.15in}
	\subfigure[$K=50$]{
		\centering
		\includegraphics[width=0.20\textwidth]{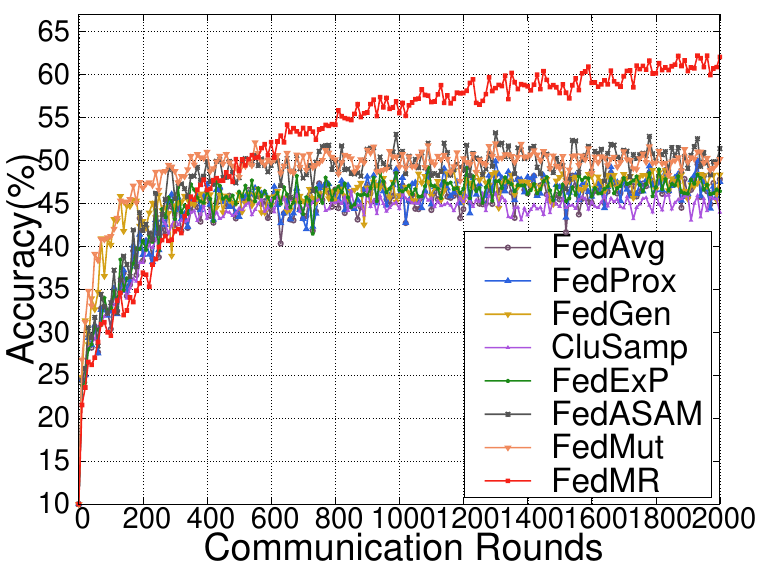}
		\label{fig:frac_0.5}
	}\hspace{-0.15in}
	\subfigure[$K=100$]{
		\centering
		\includegraphics[width=0.20\textwidth]{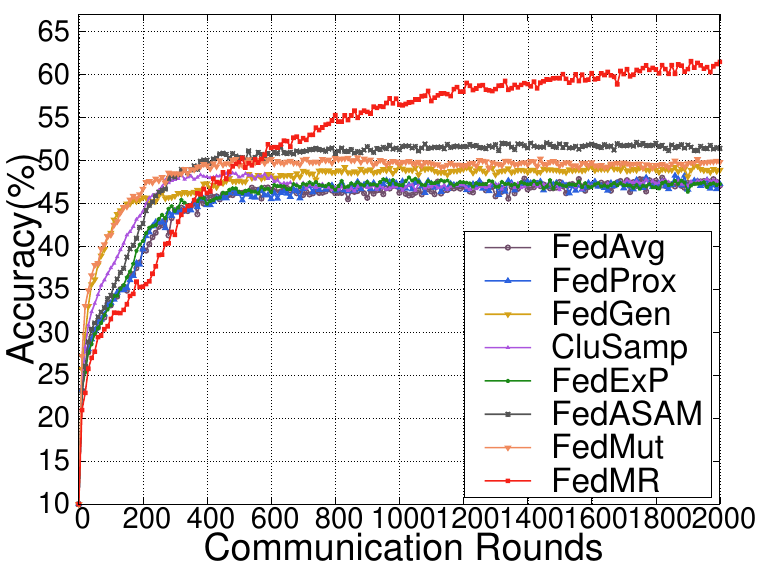}
		\label{fig:frac_1.0}
	}\hspace{-0.1in}
	\caption{Comparison  of  FL methods using ResNet-20 model on CIFAR-10 dataset with $\alpha=0.1$.}
	\label{fig:frac}
\end{figure*}


\subsection{Performance Comparison}

We compared the performance of our FedMR approach with seven SOTA baselines. 
For datasets CIFAR-10 and CIFAR-100, we considered both IID and non-IID scenarios (with $\alpha=0.1,0.5,1.0$, respectively). 
For dataset FEMNIST, we considered its original non-IID settings~\cite{leaf}.

\subsubsection{Comparison of Test Accuracy}
Table \ref{tab:acc} compares FedMR with the  SOTA FL methods, considering both non-IID and IID scenarios based on three different DL models. 
The first two columns denote the model type and dataset type, respectively. 
Note that to enable fair comparison, we cluster the test accuracy results generated by the FL methods based on the same type of local models. 
The third column shows different distribution settings for 
client data, indicating the data heterogeneity of clients. 
The fourth column has eight sub-columns, which present the test accuracy information together with its standard deviation
for all the investigated FL methods, respectively. 

From Table \ref{tab:acc}, we can observe that FedMR can achieve the highest test accuracy in all the scenarios regardless of model type, dataset type, and data heterogeneity. For CIFAR-10 and CIFAR-100, we can find that FedMR outperforms the seven baseline methods significantly in both non-IID and IID scenarios. 
For example, when dealing with a non-IID  CIFAR-10 scenario ($\alpha=0.1$) using ResNet-20-based models, FedMR achieves test accuracy with an average of 58.40\%, while the second highest average test accuracy obtained by FedMut is only 50.75\%.
Note that the performance of FedMR on FEMNIST is not as notable as the one on both CIFAR-10 and CIFAR-100. 
This is mainly because the classification task on FEMNIST is much simpler than the ones applied on datasets CIFAR-10 and CIFAR-100, which leads to the high test accuracy of the seven baseline methods. However, even in this case, FedMR can still achieve the best test accuracy among all the investigated FL methods.

\subsubsection{Comparison of Model Convergence}
Figure~\ref{fig:accuacy} presents the convergence trends 
of the seven FL methods (including FedMR) on the CIFAR-100 dataset.
Note that here the training of FedMR is based on our proposed 
two-stage training scheme, where the first stage uses 100 FL training rounds to achieve a pre-trained model. 
Here, to enable  fair comparison, the test accuracy of
 FedMR at some FL training rounds is calculated by an intermediate global model, which is an aggregated version of all the local models within that round. 
The four sub-figures show the results for different data distributions of clients. 
This figure shows that FedMR outperforms the other six FL methods consistently in both non-IID and IID scenarios.
This is mainly because FedMR can easily escape from the stuck-at-local-search due to the model recombination operation in each FL round. 
Moreover, due to the fine-grained training, we can observe that the learning curves in each sub-figure are much smoother than the ones of other FL methods. 
We also compared CNN- and VGG-16-based FL methods and found 
similar observations.  

\subsection{Ablation Study}


\subsubsection{Impacts of Activated Clients }
Figure \ref{fig:frac} compares
the learning trends between   FedMR and six baselines for a 
 non-IID scenario ($\alpha=0.1$) with both ResNet-20 model and 
 CIFAR-10 dataset, 
where the numbers of activated clients are $5$, $10$, $20$, $50$, and $100$, respectively.
From Figure~\ref{fig:frac}, we can observe that FedMR achieves the best inference performance
for all cases.
We can also observe that too few activated clients (i.e., $K=5$) result in a degradation of the accuracy of the global model in all the FL methods.
We find that when the number of activated clients increases, the convergence fluctuations reduce significantly and FedMR achieves the smallest fluctuations compared to all the baselines for all cases.
%



\begin{figure}[h]
\centering
	\subfigure[ResNet-20]{
		\centering
		\includegraphics[width=0.22\textwidth]{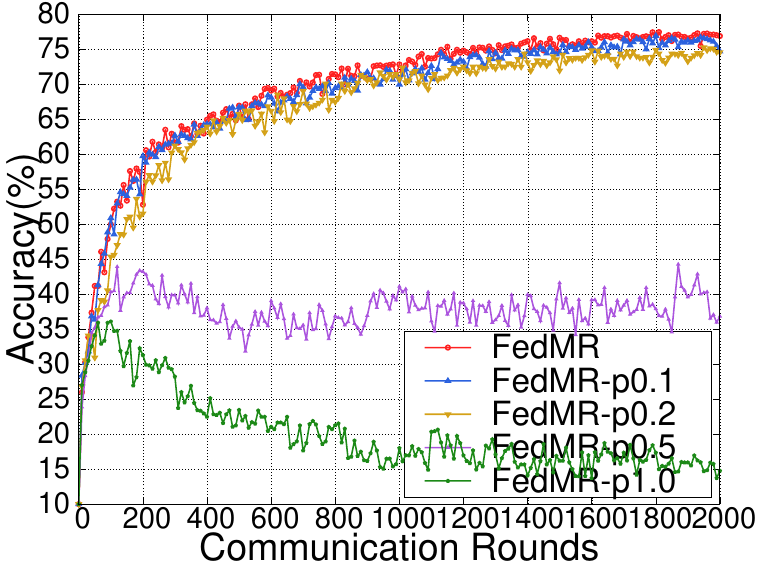}
		
	}\hspace{-0.1in}
    \subfigure[VGG-16]{
		\centering
		\includegraphics[width=0.22\textwidth]{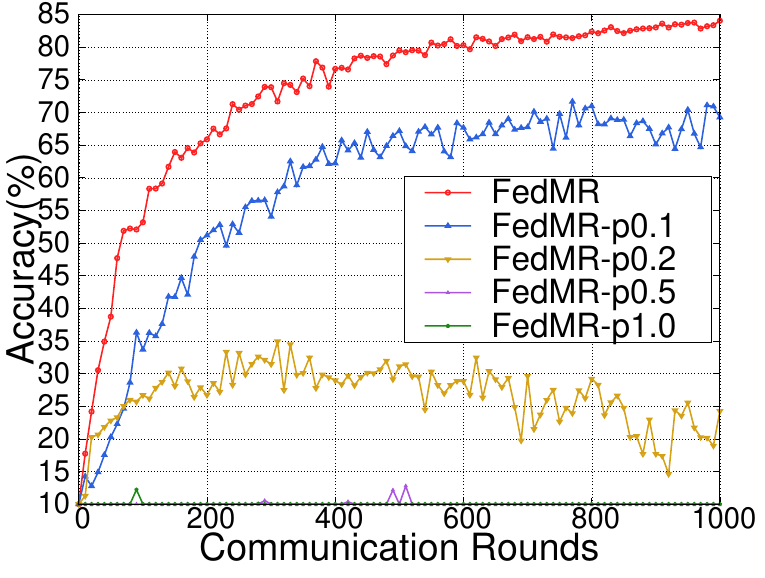}
	}\vspace{-0.15in}
 \caption{Learning curves for partitioning strategies.} 
 \label{fig:abl_mr}
\end{figure}

\subsubsection{Impacts of  Model Layer Partitioning} 
To show the effectiveness
of our layer-wise
model recombination scheme, we evaluated 
the FedMR performance using different 
model layer partitioning strategies. 
We use ``FedMR-p$x$'' to denote that the model is divided
into $\lceil\frac{1}{x}\rceil$ ($x\in(0,1.0]$) segments, where the model recombination is based on segments rather than layers. 
Note that FedMR does not divide a single layer into multiple segments. Instead, in FedMR each segment involves multiple layers (i.e., one or more layers), and the FedMR takes the recombination over segments among different clients. 
Specifically, for the $j^{th}$ layer, it belongs to $\lceil\frac{j}{x\times N}\rceil^{th}$ segment. Since $x\in(0,1.0]$, each segment contains at least one layer.
Note that $x=1.0$ indicates an extreme case, where local models are randomly dispatched to clients without recombination.

%
%

		

Figure~\ref{fig:abl_mr} presents the ablation study results on CIFAR-10 dataset
using ResNet-20-based and VGG-16-based FedMR, where the data on clients are non-IID distributed ($\alpha=1.0$).
Note that all the cases here did not use the two-stage training scheme. From this figure, we can find that FedMR outperforms the other variants significantly. Moreover, when the  
granularity of partitioning goes coarser, 
the classification performance of FedMR becomes 
worse. 


\begin{figure}[t]
\centering
    \subfigure[ResNet-20]{
		\centering
		\includegraphics[width=0.22\textwidth]{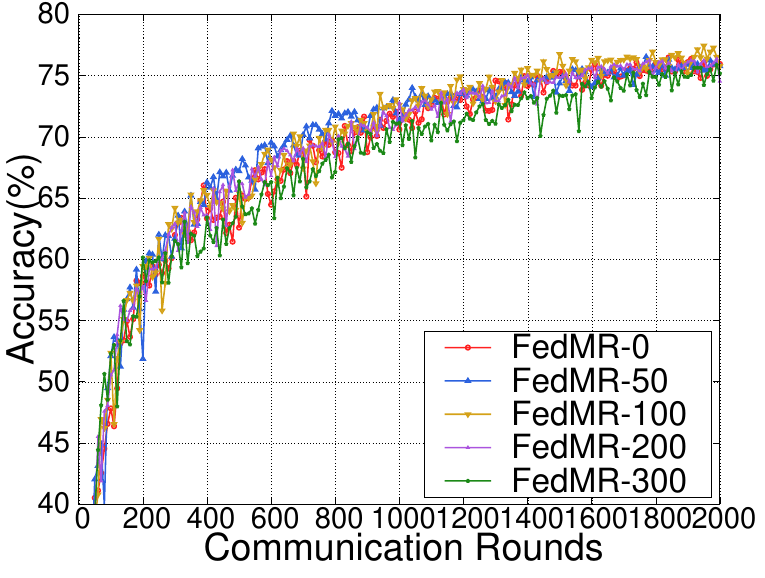}
	}\hspace{-0.1in}
	\subfigure[VGG-16]{
		\centering
		\includegraphics[width=0.22\textwidth]{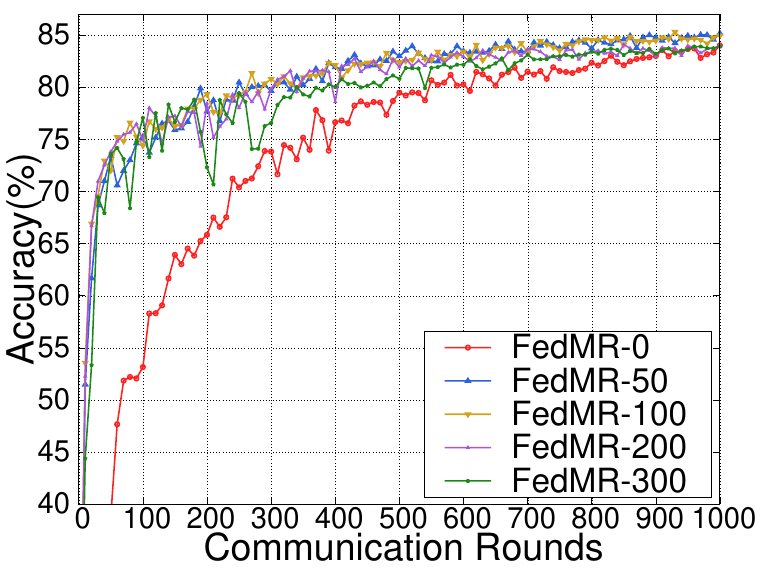}
	}
 \caption{Learning curves for  two-stage training settings.} \label{fig:abl_twostage}
\end{figure}

\subsubsection{Two-stage Training Scheme.} To demonstrate the effectiveness of our proposed two-stage training scheme, 
we conducted experiments on CIFAR-10 dataset using ResNet-20-based and VGG-16-based FedMR, 
where the data on clients are non-IID distributed ($\alpha=1.0$).  
Figure~\ref{fig:abl_twostage} presents the learning trends
of FedMR with five different two-stage training
 settings.
 Here,  we use the notation  ``FedMR-$n$''
 to denote that the first stage involves $n$ rounds of  model 
 aggregation-based local training to obtain a pre-trained global model, while 
 the remaining rounds conduct local training based on our proposed model recombination-based method. 
From Figure~\ref{fig:abl_twostage}, we can observe that the two-stage training-based
FedMR methods (i.e., FedMR-50 and FedMR-100) achieve 
the best performance from the perspectives of test accuracy and convergence rate.
Note that our two-stage training scheme can achieve a
more significant improvement  on the case using 
VGG-16 model, which has a much larger size than ResNet-20 model.
This is because the fine-grained FedMR without the first-stage training is not good at dealing with large-size models at the beginning of FL training, which requires many more training rounds than the  coarse-grained aggregation-based  methods to achieve a given preliminary
classification accuracy target. 
By resorting to the two-stage training scheme, 
such a slow convergence problem can be greatly 
mitigated.


\subsection{Discussions}
\subsubsection{Privacy Preserving} 
Similar to traditional FedAvg-based FL methods, FedMR does not require clients to send their data to the cloud server, thus the data privacy can be mostly guaranteed by the secure clients themselves. 
Since our model recombination operation breaks the dependencies between model layers and conducts the shuffling of layers among models, in practice, it is hard for adversaries to
restore the confidential data from a fragmentary recombined model without knowing the sources of layers.
We present a secure recombination mechanism to avoid privacy leakage from the cloud server. Please see Appendix \ref{sec:sec_mr} for more details. Our secure recombination mechanism ensures that the cloud server only receives recombined models from clients, which means that the cloud server cannot restore the model of each client.

\subsubsection{Limitations}
As a novel FL paradigm, FedMR shows much better inference
performance than most SOTA FL methods. Although this paper proposed an efficient two-stage training scheme to accelerate the overall FL training processes, there still exist numerous chances (e.g., client selection strategies,  dynamic combination of model aggregation and model recombination operations) to enable further optimization on the current version of FedMR. 
Meanwhile, the current version of FedMR does not consider personalization~\cite{pillutla2022federated,ma2022layer} and fairness~\cite{li2023fairer,li2023fairness}, two very important topics that deserve to be studied in the future.

\section{Conclusion}
Due to the coarse-grained aggregation of FedAvg as well as the  
uniform    client model initialization, when dealing with 
uneven data distribution among clients,
existing Federated Learning (FL) methods greatly suffer from 
the problem of low inference performance.
To address this problem, this paper presented a new FL paradigm named
FedMR, which enables different layers of local models to be trained on different clients.
Since FedMR supports
both fine-grained model recombination 
and diversified
local training
initialization, 
it enables effective and efficient search for superior generalized models for all clients. Comprehensive experimental results show both
the effectiveness and pervasiveness
of our proposed method in terms of inference accuracy and convergence rate.

\section{Acknowledgement}
This work was supported by Natural Science Foundation of China (62272170),   and ``Digital Silk Road'' Shanghai International Joint Lab of Trustworthy Intelligent Software (22510750100), the National Research Foundation, Singapore, and DSO National Laboratories under the AI Singapore Programme (AISG Award No: AISG2-GC-2023-008), the National Research Foundation, Singapore, and the Cyber Security Agency under its National Cybersecurity R\&D Programme (NCRP25-P04-TAICeN),  Natural Science Foundation (NSF CCF-2217104) and National Natural Science Foundation of China (62306116).
Any opinions, findings and conclusions or recommendations expressed in this material are those of the author(s) and do not reflect the views of National Research Foundation, Singapore and Cyber Security Agency of Singapore.

\bibliographystyle{ACM-Reference-Format}
\bibliography{sample-base}

\appendix
\section{Proof of  FedMR Convergence}\label{sec:proof}

\subsection{Notations}
In our FedMR approach, the global model is aggregated from all the recombined models and all the models have the same weight. Let $t$ exhibit the $t^{th}$ SGD iteration on the local device, $v$ is the intermediate variable that represents the result of SGD update after exactly one iteration.
The update of FedMR is as follows:
\begin{equation}
v_{t+1}^k = w_t^k - \eta_t \nabla f_k (w_t^k, \xi_t^k), 
\end{equation}
\begin{equation}\label{eq:iter}
w_{t+1}^k=\left\{
\begin{array}{rl}
v_{t+1}^k, &if \quad E \nmid t + 1 \\
RM(v_{t+1}^{k}), &if \quad E \mid t + 1\\
\end{array}
,
\right.  
\end{equation}
where $w_{t}^k$ represents the model of the $k^{th}$ client in the $t^{th}$ iteration. 
$w_{t+1}$ denotes the global model of the ${(t+1)}^{th}$ iteration. 
$RM(v_{t+1}^{k})$ denotes the recombined model.

Since FedMR recombines all the local models in each round and the recombination only shuffles layers of models, the parameters of recombined models are all from the models before recombination, and no parameters are discarded. Therefore, when $\quad E \mid t + 1$, we can obtain the following invariants:
\begin{equation}\label{eq:mr_v1}
\sum_{k=1}^K v_{t+1}^k= \sum_{k=1}^K RM(v_{t+1}^k)= \sum_{k=1}^K w_{t+1}^k ,
\end{equation}
\begin{equation}\label{eq:mr_v2}
\sum_{k=1}^K||v_{t+1}^k-x||^2=\sum_{k=1}^K||w_{t+1}^k-x||^2 ,
\end{equation}
where $w_{t}^k$ is the $k^{th}$ recombined model in $(t-1)^{th}$ iteration, which is as the local model to be dispatched to $k^{th}$ client in $t^{th}$ iteration, $x$ can any vector with the same size as $v_{t}^k$.
Similar to \cite{sgd}, we define two variables $\overline{v}_t$ and $\overline{w}_t$:
\begin{equation}
\begin{split}
    \overline{v}_t = \frac{1}{K} \sum_{k=1}^{K} v_t^k, 
    \overline{w}_t = \frac{1}{K} \sum_{k=1}^{k} w_t^k.
\end{split}
\end{equation}

Inspired by \cite{convergence}, we make the following definition:
\begin{equation}
\begin{split}
    g^k_t =\nabla f_k (w_t^k; \xi_t^k).
\end{split}
\end{equation}

\subsection{Proof of Lemma \ref{key_lamma1}}
\begin{proof} 
Assume $v_{t}^k$ has $n$ layers, we have $v_{t}^k=L_1\oplus L_2 \oplus ... \oplus L_n$.
Let $L_i = [p^{v_t^k}_{(i,0)}, p^{v_t^k}_{(i,1)}, ... , p^{v_t^k}_{(i,|L_i|)}]$, where $p^{v_t^k}_{(i,j)}$ denotes the $j^{th}$ parameter of the layer $L_i$ in the model $v_{t}^k$.
We have
\begin{equation}
\begin{split}
    \sum_{k=1}^K||v_{t}^k-x||^2 = \sum_{k=1}^K\sum_{i=1}^n\sum_{j=1}^{|L_i|}||p^{v_t^k}_{(i,j)} - p^{x}_{(i,j)}||^2
\end{split}
\end{equation}
\begin{equation}
\begin{split}
    \sum_{k=1}^K||w_{t}^k-x||^2 = \sum_{k=1}^K\sum_{i=1}^n\sum_{j=1}^{|L_i|}||p^{w_t^k}_{(i,j)} - p^{x}_{(i,j)}||^2
\end{split}
\end{equation}

Since model recombination only shuffles layers of models, the parameters of recombined models are all from the models before recombination and no parameters are discarded. We have
\begin{equation}\label{eq:q2_0}
\begin{split}
    \forall_{i \in [1,n], j \in [1, |L_i|]} \sum_{k=1}^K p^{v_t^k}_{(i,j)} = \sum_{k=1}^K p^{w_t^k}_{(i,j)}
\end{split}
\end{equation}
\begin{equation}\label{eq:q2_1}
\begin{split}
    \forall_{k\in [1,K],i \in [1,n], j \in [1, |L_i|]} \exists_{q\in [1,K]}  \{{p^{v_t^k}_{(i,j)} = p^{w_t^{q}}_{(i,j)}}\}
\end{split}
\end{equation}
\begin{equation}\label{eq:q2_2}
\begin{split}
    \forall_{k\in [1,K],i \in [1,n], j \in [1, |L_i|]} \exists_{q\in [1,K]}  \{{p^{w_t^k}_{(i,j)} = p^{v_t^{q}}_{(i,j)}}\}
\end{split}
\end{equation}

According to Equations \ref{eq:q2_0}-\ref{eq:q2_2}, we have

\begin{equation}
\begin{split}
\sum_{k=1}^K||v_{t}^k-x||^2 & = \sum_{k=1}^K\sum_{i=1}^n\sum_{j=1}^{|L_i|}||p^{v_t^k}_{(i,j)} - p^{x}_{(i,j)}||^2 \\
& = \sum_{k=1}^K||w_{t}^k-x||^2\\
\end{split}
\end{equation}
\end{proof}

\subsection{Key Lemmas}
To facilitate the proof of our Theorem~\ref{thm1}, inspired by~\cite{convergence}, we can present the following two lemmas. 
Note that the following proofs are general proofs for all the multi-model-based FL approaches that satisfy Lemma~\ref{key_lamma1}.

\begin{lemma} \label{lemma1} (Results of one  step SGD). If $\eta_t\leq \frac{1}{4L}$, we have
\begin{equation}
\begin{split}
    \mathbb{E}||\overline{v}_{t+1} - w^\star||^2 \leq &\frac{1}{K}\sum_{k=1}^K(1-\mu\eta_t)||v^k_t -w^\star||^2\\
    & + \frac{1}{K}\sum_{k=1}^K||w^k_t - w^k_{t_0}||^2 + 10\eta_t^2 L\Gamma
\end{split}
. \nonumber
\end{equation}
\end{lemma}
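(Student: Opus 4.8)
The plan is to prove Lemma~\ref{lemma1} as a one-step descent estimate for the averaged iterate $\overline{v}_{t+1}=\frac{1}{K}\sum_k v_{t+1}^k$, following the FedAvg analysis of \cite{convergence} but using Lemma~\ref{key_lamma1} to make the estimate survive the recombination step. First I would write the update in averaged form, $\overline{v}_{t+1}=\overline{w}_t-\eta_t \bar g_t$ with $\bar g_t=\frac{1}{K}\sum_k \nabla f_k(w_t^k;\xi_t^k)$, and let $\bar h_t=\frac{1}{K}\sum_k \nabla f_k(w_t^k)$ be its conditional mean. Adding and subtracting $\eta_t \bar h_t$ inside the norm splits $\mathbb{E}\|\overline{v}_{t+1}-w^\star\|^2$ into a deterministic-descent part $\|\overline{w}_t-w^\star-\eta_t\bar h_t\|^2$ and a zero-mean stochastic part $\eta_t^2\,\mathbb{E}\|\bar g_t-\bar h_t\|^2$; the cross term vanishes by unbiasedness, and the stochastic part is controlled by the variance bound $\theta^2$ of Assumption~\ref{asm3} (and absorbed into the constants, since the stated bound carries no explicit $\theta^2$).

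The core of the argument is the deterministic-descent part, which I would expand as $\|\overline{w}_t-w^\star\|^2-2\eta_t\langle \overline{w}_t-w^\star,\bar h_t\rangle+\eta_t^2\|\bar h_t\|^2$ and treat client by client. For each $k$, strong convexity (Assumption~\ref{asm2}) applied to $\langle w_t^k-w^\star,\nabla f_k(w_t^k)\rangle$ yields a term $f_k(w_t^k)-f_k(w^\star)+\frac{\mu}{2}\|w_t^k-w^\star\|^2$, while $L$-smoothness (Assumption~\ref{asm1}) together with $f_k^\star=\min f_k$ gives $\|\nabla f_k(w_t^k)\|^2\le 2L\big(f_k(w_t^k)-f_k^\star\big)$, which bounds both $\|\bar h_t\|^2$ (after Jensen) and the cross-term remainders. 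The step-size hypothesis $\eta_t\le \frac{1}{4L}$ is exactly what forces the net coefficient of the nonnegative quantities $f_k(w_t^k)-f_k^\star$ to be nonpositive, so these are discarded and only the heterogeneity gap $\Gamma=f^\star-\frac{1}{K}\sum_k f_k^\star$ remains, contributing the $10\eta_t^2 L\Gamma$ term.

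To assemble the contraction I would combine the leading $\|\overline{w}_t-w^\star\|^2$ with the $-\mu\eta_t\frac{1}{K}\sum_k\|w_t^k-w^\star\|^2$ coming from strong convexity, using Jensen's inequality $\|\overline{w}_t-w^\star\|^2\le \frac{1}{K}\sum_k\|w_t^k-w^\star\|^2$, to reach the factor $(1-\mu\eta_t)$ multiplying the \emph{average} of the squared distances. At this point I would invoke the second identity of Lemma~\ref{key_lamma1} to rewrite $\frac{1}{K}\sum_k\|w_t^k-w^\star\|^2$ as $\frac{1}{K}\sum_k\|v_t^k-w^\star\|^2$; this substitution is precisely what allows the one-step recursion to pass through the layer shuffle unchanged, and it is the only place the recombination invariant is needed. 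The remaining displacement of each $w_t^k$ from the round's reference point then collects into the client-drift term $\frac{1}{K}\sum_k\|w_t^k-w_{t_0}^k\|^2$, where $w_{t_0}^k$ is the recombined model at the start of the current round.

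The main obstacle is the client-drift term, and it is where FedMR genuinely differs from FedAvg. In FedAvg every local model is reset to the common aggregate at the start of a round, so drift can be measured against the shared iterate and shown to be $O(\eta^2)$; under recombination the local models are only permuted layer-wise and are never synchronized, so the deviation-from-mean has an $O(1)$ component and is useless for the estimate. The resolution is to reference the drift to each client's own round-start model $w_{t_0}^k$---which accumulates from within-round SGD steps only and is therefore $O(\eta)$, to be bounded later by $\mathbb{E}\|\nabla f_k(w;\xi)\|^2\le G^2$ (Assumption~\ref{asm3}) as the $4(E-1)^2G^2$ contribution to $B$---and to lean on Lemma~\ref{key_lamma1} so that the squared-distance bookkeeping is invariant under the shuffle. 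The second delicate point is arranging the inner-product decomposition together with the threshold $\eta_t\le\frac{1}{4L}$ so that the $\Gamma$ contribution comes out at order $\eta_t^2$ rather than $\eta_t$; a crude bound there would cost a factor of $\eta_t$ and break the eventual $O(1/T)$ rate of Theorem~\ref{thm1}.
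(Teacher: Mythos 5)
Your plan diverges from the paper's proof at the decomposition stage, and the divergence creates a genuine gap rather than an alternative route. You adopt the averaged-iterate template of the FedAvg analysis \cite{convergence}: write $\overline{v}_{t+1}=\overline{w}_t-\eta_t\bar g_t$ and expand $\|\overline{w}_t-w^\star-\eta_t\bar h_t\|^2$. In that decomposition the cross term is $-\frac{2\eta_t}{K}\sum_k\langle\overline{w}_t-w^\star,\nabla f_k(w_t^k)\rangle$, not $-\frac{2\eta_t}{K}\sum_k\langle w_t^k-w^\star,\nabla f_k(w_t^k)\rangle$; to apply strong convexity as you propose you must split off $\langle\overline{w}_t-w_t^k,\nabla f_k(w_t^k)\rangle$, and every way of bounding that remainder (Cauchy--Schwarz, Young, or convexity plus smoothness) passes through $\|\overline{w}_t-w_t^k\|^2$, the deviation from the mean. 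As you yourself observe in your final paragraph, this quantity is $O(1)$ under FedMR, because the round-start models are layer-wise recombinations that are never synchronized, so $\|\overline{w}_{t_0}-w_{t_0}^k\|$ does not vanish. Your proposed resolution---referencing drift to each client's own round-start model $w_{t_0}^k$---is the right goal, but it does not connect to your decomposition: there is no algebraic route from $\langle\overline{w}_t-w_t^k,\nabla f_k(w_t^k)\rangle$ to $\|w_t^k-w_{t_0}^k\|^2$ that avoids the $O(1)$ gap, and Lemma~\ref{key_lamma1} (which only equates sums of squared distances across the shuffle) cannot supply one. So the proof stalls exactly at the obstacle you named.

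The paper's proof avoids the obstacle structurally rather than by a patch: it applies Jensen's inequality at time $t+1$ \emph{first}, $\|\overline{v}_{t+1}-w^\star\|^2\le\frac{1}{K}\sum_k\|v_{t+1}^k-w^\star\|^2$, and only then expands each per-client step $v_{t+1}^k=w_t^k-\eta_t g_t^k$. All inner products are then per-client, $\langle w_t^k-w^\star,g_t^k\rangle$, so strong convexity applies directly and a deviation-from-mean term never appears; Lemma~\ref{key_lamma1} is invoked only to exchange $\sum_k\|w_t^k-w^\star\|^2$ with $\sum_k\|v_t^k-w^\star\|^2$. The drift term then arises from a step your proposal does not contain: the leftover $-\frac{\phi}{K}\sum_k\big(f_k(w_t^k)-f^\star\big)$ is not sign-definite and cannot simply be discarded (discarding crudely is also what would leave $\Gamma$ at order $\eta_t$---the ``delicate point'' you flag but leave unresolved). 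The paper instead splits $f_k(w_t^k)-f^\star=[f_k(w_t^k)-f_k(w_{t_0}^k)]+[f_k(w_{t_0}^k)-f^\star]$ and bounds $f_k(w_{t_0}^k)-f_k(w_t^k)$ by convexity plus Young's inequality and smoothness; this single step is simultaneously the source of the drift term $\frac{1}{K}\sum_k\|w_t^k-w_{t_0}^k\|^2$ and of the claimed order of the $\Gamma$ coefficient. Two smaller mismatches: your stochastic term $\eta_t^2\theta^2/K$ cannot be ``absorbed'' into a bound whose remaining terms may all vanish (the paper sidesteps it by bounding $\|g_t^k\|^2$ directly), and chaining the recursion across rounds also needs the first identity of Lemma~\ref{key_lamma1}, $\sum_k v_t^k=\sum_k w_t^k$, not only the second.
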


\begin{proof} 
According to Lemma~\ref{key_lamma1} (i.e., Equation \ref{eq:mr_v1} and Equation \ref{eq:mr_v2}), we have
\begin{equation}
\begin{split}
    ||\overline{v}_{t+1} - w^\star||^2
    \leq  &\frac{1}{K}\sum_{k=1}^K||v^k_{t+1} - w^\star||^2\\
      = & \frac{1}{K}\sum_{k=1}^K(||v^k_t -w^\star||^2 -2\eta_t \langle w^k_t-w^\star, g^k_t \rangle\\
     & + \eta_t^2||g^k_t||^2)
\end{split}
\end{equation}
Let $B_1 =  -2\eta_t\langle w^k_t-w^\star, g^k_t \rangle$ and $B_2=\eta_t^2\sum_{k=1}^K||g^k_t||^2$.
According to Assumption \ref{asm2}, we have
\begin{equation}\label{eq_b1}
B_1 \leq -2\eta_{t}(f_k (w^k_t)-f_k (w^\star))-\mu\eta_t ||w^k_t-w^\star||^2
\end{equation}
According to Assumption \ref{asm1}, we have
\begin{equation}\label{eq_b2}
B_2 \leq 2\eta_t^2 L (f_k(w_t^k)-f_k^\star)
\end{equation}

According to Equation \ref{eq_b1} and \ref{eq_b2}, we have
\begin{equation}
\begin{split}
    ||\overline{v}_{t+1} - w^\star||^2 
    \leq &\frac{1}{K}\sum_{k=1}^K[(1-\mu\eta_t)||v^k_t -w^\star||^2 \\
    &-2\eta_{t}(f_k (w^k_t)-f_k (w^\star)) +2\eta_t^2 L(f_k(w_t^k)-f_k^\star)]
\end{split}
\end{equation}
    Let $C = \frac{1}{K}\sum_{k=1}^K[-2\eta_{t}(f_k (w^k_t)-f_k (w^\star)) + 2\eta_t^2 L (f_k(w_t^k)-f_k^\star)]$. We have
\begin{equation}
\begin{split}
    C &= \frac{-2\eta_{t}}{K}\sum_{k=1}^K(f_k (w^k_t)-f_k (w^\star)) + \frac{2\eta_t^2 L}{K}\sum_{k=1}^K(f_k(w_t^k)-f_k^\star)\\
    & = -\frac{2\eta_t(1-\eta_t L)}{K}\sum_{k=1}^K(f_k(w^k_t)-f^\star) + \frac{2\eta_t^2 L}{K}\sum_{k=1}^K(f^\star-f^\star_k)
\end{split}
\end{equation}
Let $\Gamma=f^\star-\frac{1}{K}\sum_{k=1}^K f^\star_k$ and $\phi=2\eta_t(1-L\eta_t)$. We have
\begin{equation}
    C = -\frac{\phi}{K}\sum_{k=1}^K(f_k(w^k_t)-f^\star) + 2\eta_t^2 L\Gamma
\end{equation}
Let $D=-\frac{1}{K}\sum_{k=1}^K(f_k(w^k_t)-f^\star)$, $E \mid  t_0$ and $t-t_0\leq E$. We have
\begin{equation}
\begin{split}
    D = -\frac{1}{K}\sum_{k=1}^K(f_k(w^k_t) - f_k(w^k_{t_0}) + f_k(w^k_{t_0}) -f^\star)
\end{split}
\end{equation}
By Cauchy–Schwarz inequality, we have
\begin{equation}\label{eq_D}
\begin{split}
    D \leq &\frac{1}{2K}\sum_{k=1}^K(\eta_t ||\nabla f_k(w^k_{t_0})||^2 + \frac{1}{\eta_t}||w^k_t - w^k_{t_0}||^2)\\
    &- \frac{1}{K}\sum_{k=1}^K(f_k(w^k_{t_0}) -f^\star)\\
     \leq &\frac{1}{2K}\sum_{k=1}^K[2\eta_t L(f_k(w^k_{t_0})-f_k^\star) + \frac{1}{\eta_t}||w^k_t - w^k_{t_0}||^2]\\
     &- \frac{1}{K}\sum_{k=1}^K(f_k(w^k_{t_0}) -f^\star)
\end{split}
\end{equation}

Note that since $\eta\leq \frac{1}{4L}$, $\eta_t \leq \phi\leq 2\eta_t$ and $\eta_t L \leq \frac{1}{4}$.
According to Equation \ref{eq_D}, we have
\begin{equation}
\begin{split}
    C 
     \leq &\frac{\phi}{2\eta_t K}\sum_{k=1}^K||w^k_t - w^k_{t_0}||^2 + (\phi \eta_t L + 2\eta_t^2 L)\Gamma + \frac{\phi}{K}\sum_{k=1}^K(f^\star - f_k^\star)\\
    \leq &\frac{\phi}{2\eta_t K}\sum_{k=1}^K||w^k_t - w^k_{t_0}||^2 + (\phi \eta_t L + \phi + 2\eta_t^2 L)\Gamma\\
    \leq  &\frac{1}{K}\sum_{k=1}^K||w^k_t - w^k_{t_0}||^2 + 10\eta_t^2 L \Gamma
\end{split}
\end{equation}
\end{proof}

\begin{lemma} \label{lemma2}
According to Equation~\ref{eq:iter}, the model recombination occurs every $E$ iterations.
Assume that in each training round, $t_0$ is the first iteration and iteration $t - t_0 \leq E-1$.
 Given the constraint on learning rate from \cite{convergence}, we know that $\eta_t \leq \eta_{t_0} \leq 2 \eta_t$. It follows that
\begin{equation}
\begin{split}
 \frac{1}{K} \sum_{k=1}^{K} ||w_t^k - {w}^k_{t_0}||^2 \leq 4\eta_t^2 (E - 1)^2 G^2.
\end{split}
\nonumber
\end{equation}
\end{lemma}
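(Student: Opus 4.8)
The plan is to exploit that, between two consecutive recombinations, each local model is updated by pure SGD, so the drift $w_t^k - w_{t_0}^k$ is simply an accumulation of at most $E-1$ gradient steps. First I would unwind the update rule. Since $t_0$ is the most recent recombination iteration and $t - t_0 \le E-1 < E$, the next recombination after $t_0$ occurs at $t_0 + E > t$; hence no recombination takes place strictly between $t_0$ and $t$, and for every $s \in \{t_0, \dots, t-1\}$ the update degenerates to $w_{s+1}^k = v_{s+1}^k = w_s^k - \eta_s g_s^k$. Telescoping this recursion yields
\[ w_t^k - w_{t_0}^k = -\sum_{s=t_0}^{t-1}\eta_s\, g_s^k. \]

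Next I would bound the squared norm of this sum. Applying the Cauchy--Schwarz inequality to the $t-t_0$ summands gives
\[ \|w_t^k - w_{t_0}^k\|^2 \le (t-t_0)\sum_{s=t_0}^{t-1}\eta_s^2\,\|g_s^k\|^2. \]
Taking expectations and invoking Assumption~\ref{asm3}, which bounds $\mathbb{E}\|g_s^k\|^2 \le G^2$, together with the monotone-schedule estimate $\eta_s \le \eta_{t_0} \le 2\eta_t$ recorded in the lemma statement (so that $\eta_s^2 \le 4\eta_t^2$), each of the $t-t_0$ terms is at most $4\eta_t^2 G^2$. Therefore
\[ \mathbb{E}\|w_t^k - w_{t_0}^k\|^2 \le (t-t_0)^2\,4\eta_t^2 G^2 \le 4\eta_t^2 (E-1)^2 G^2, \]
where the last inequality uses $t-t_0 \le E-1$. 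Averaging over $k \in \{1,\dots,K\}$ preserves the bound and gives the claim.

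The step that needs the most care --- rather than genuine difficulty --- is justifying that the update collapses to a clean SGD recursion across the entire window $[t_0,t]$; this rests on reading the recombination schedule correctly, so that $t_0$ being the nearest recombination point forces the next one to land beyond $t$. The only other delicate ingredient is the within-round learning-rate estimate $\eta_{t_0} \le 2\eta_t$, which I would import from the cited convergence analysis \cite{convergence} and which lets me replace every step size by the common value $\eta_t$; the remainder is a routine application of Cauchy--Schwarz and the bounded-second-moment assumption, with no interaction between distinct clients, so the per-client bound passes through the average unchanged.
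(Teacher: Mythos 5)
Your proposal is correct and follows essentially the same route as the paper's own proof: telescope the pure-SGD updates over the window $[t_0,t]$ (no recombination intervenes), apply Cauchy--Schwarz, bound the stochastic gradients by $G^2$ via Assumption~\ref{asm3}, and use $\eta_s \le \eta_{t_0} \le 2\eta_t$ together with $t-t_0 \le E-1$. If anything, your write-up is more careful than the paper's, which states the first step as an equality with the sum running to $t_0+E-1$ rather than $t-1$ and suppresses the expectation; these are cosmetic slips that your version avoids.
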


\begin{proof}
\begin{equation}
\begin{split}
\frac{1}{K} \sum_{k=1}^{K}||w_t^k - {w}^k_{t_0}||^2 = &\frac{1}{K} \sum_{k=1}^{K}||\sum_{t=t_0}^{t_0 + E - 1} \eta_t \nabla f_{a_1}(w_t^{a_1};\xi_t^{a_1})||^2   \\
& \leq  (t - t_0) \sum_{t = t_0}^{t_0 + E - 1} \eta_t^2 G^2  \\  
& \leq  (E - 1) \sum_{t = t_0}^{t_0 + E - 1} \eta_t^2 G^2  \\
&\leq 4\eta_t^2 (E - 1)^2 G^2.
\end{split}
\nonumber
\end{equation}
\end{proof}

\subsection{Proof of Theorem \ref{thm1}}
Based on Lemmas \ref{lemma1} and \ref{lemma2}, we can prove Theorem \ref{thm1} using the proof framework of FedAvg~\cite{convergence}. 
Due to space limitations, please refer to the proof of FedAvg~\cite{convergence} for the details.

\section{Secure Model Recombination Mechanism}\label{sec:sec_mr}

To avoid the risk of privacy leakage caused by  exposing gradients or models to the cloud server, we propose a 
secure model recombination mechanism for FedMR, which allows the random exchange of 
model layers among clients before model training or upload. As shown in 
Figure \ref{fig:sec_mr}, within a 
round of the  secure model recombination, the update of each model (i.e., $m$)  consists of four stages:

\begin{figure}[htp] 
	\begin{center} 		\includegraphics[width=0.45\textwidth]{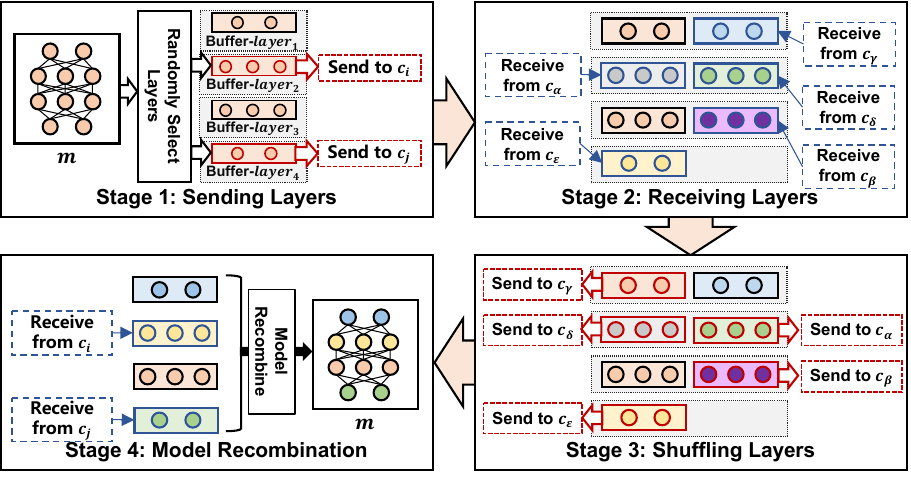}
\caption{Workflow of secure model recombination.}
		\label{fig:sec_mr} 
	\end{center}
\end{figure} 


\textbf{Stage 1:} Assume that the local model has $len$ layer. Each client maintains a  buffer for each layer.
Firstly, each client randomly selects
a part of its layers and sends them to other activated clients, while the 
remaining layers are saved in their corresponding buffers. 
Note that a selected
layer can only be sent to one client.
For example,   in Figure \ref{fig:sec_mr}, the client $m$ sends $layer_2$  and  $layer_4$ to 
$c_i$ and $c_j$, respectively.


\textbf{Stage 2:} 
Once receiving a layer from another client, the receiving 
client $m$ will add the layer to its corresponding 
buffer. For example, in Figure \ref{fig:sec_mr}, the client $m$ totally receives five layers.
Besides the retained two layers in stage 1, $m$
now has seven layers in total in its buffers.



\textbf{Stage 3:} 
For each layer buffer of $m$, if there contains 
one element received from  a client $c$ in stage 2, our  mechanism
will randomly select  one layer in the buffer and 
return it back to $c$. For example, 
in Figure \ref{fig:sec_mr},  $m$ randomly
returns a  layer in {\it Buffer-layer1}  back to a client $c_{\gamma}$.


\textbf{Stage 4:} 
Once receiving the returned layers from other clients, our mechanism
will recombine them with all the other layers in the buffers to form a new model. Note that the recombined model may significantly differ from the original model in Stage 1.


Note that each FL training
round can perform multiple times secure model recombination.  
Due to the randomness, it is hard for adversaries to 
figure out the sources of client model layers. 
In addition, the cloud server will broadcast a public key before the secure recombination to prevent privacy leakage. By using the public key to encrypt the model parameters of each layer, the other clients cannot directly obtain their received parameters.

\end{document}